\let\@internalcite\cite
\def\cite{\def\citeauthoryear##1##2{##1, ##2}\@internalcite}
\def\shortcite{\def\citeauthoryear##1{##2}\@internalcite}
\def\@biblabel#1{\def\citeauthoryear##1##2{##1, ##2}[#1]\hfill}
\newcommand{\R}{\mathbb{R}}
\newcommand{\quant}{\mathsf{Q}}
\newcommand{\alg}{\mathcal{A}}
\newcommand{\Xcal}{\mathcal{X}}
\newcommand{\norm}[1]{\left\lVert#1\right\rVert}
\DeclareMathOperator*{\argminB}{argmin}   
\newtheorem{example}{Example}
\newtheorem{theorem}{Theorem}
\newtheorem{proposition}[theorem]{Proposition}
\newtheorem{corollary}[theorem]{Corollary}
\newtheorem{definition}{Definition}
\newtheorem{remark}{Remark}
\newtheorem{assumption}{Assumption}
\newcommand{\blind}{1}
\newtheorem*{assumption*}{Assumption}
\newtheorem*{corolary*}{Corolary}
\newcommand\blfootnote[1]{%
	\begingroup
	\renewcommand\thefootnote{}\footnote{#1}%
	\addtocounter{footnote}{-1}%
	\endgroup
}
\title{}
\begin{document}
	\pagenumbering{gobble}

	\def\spacingset#1{\renewcommand{\baselinestretch}%
		{#1}\normalsize} \spacingset{1}
	
	
	\if1\blind
	{
		\title{\bf Conformal and $k$NN Predictive Uncertainty Quantification Algorithms in Metric Spaces
		}
        
        \author[1,2,3]{Gábor Lugosi}
\author[4,5,6]{Marcos Matabuena}

\affil[1]{Department of Economics and Business, Pompeu Fabra University, Barcelona, Spain}
\affil[2]{ICREA, Pg. Llu\'is Companys 23, 08010 Barcelona, Spain}
\affil[3]{Barcelona Graduate School of Economics}
\affil[4]{Mohamed bin Zayed University of Artificial Intelligence}
\affil[5]{Department of Biostatistics, Harvard University, Boston, MA 02115, USA}
\affil[6]{Universidad de Santiago de Compostela}

\date{}
\maketitle
\blfootnote{$*$ Corresponding author. Email: Marcos.Matabuena@mbzuai.ac.ae; mmatabuena@hsph.harvard.edu}
} 
\fi

\if0\blind
{
\bigskip
\bigskip
\bigskip
\begin{center}
	{\bf \Large  Conformal and Non-Conformal Algorithms for Predictive uncertainty Quantification in metric spaces}
\end{center}
\medskip
} \fi
\bigskip
	\maketitle	
    \begin{abstract}
This paper introduces a framework for uncertainty quantification in regression models defined on metric spaces. Using a proposed notion of homoscedasticity, we define a conformal prediction algorithm that provides finite-sample marginal coverage guarantees and fast convergence rates to the oracle prediction region. For heteroscedastic settings, we introduce a $k$NN procedure that yields locally adaptive prediction radii in general metric spaces. Although this procedure does not provide the same finite-sample guarantees as the conformal algorithm, it is designed to improve local coverage calibration without imposing smoothing assumptions. Both procedures are compatible with a broad range of regression algorithms and scale to large datasets, allowing practitioners to use their preferred models and incorporate domain-specific knowledge. Building on the heteroscedastic $k$NN approach, we also develop a flexible sequential extension for metric-space-valued time series based on nearest-neighbor expert aggregation. We establish the consistency of the proposed estimators under minimal conditions. Finally, we illustrate the practical utility of our framework in personalized medicine applications involving random objects such as probability distributions and graph Laplacians.
\end{abstract}
\noindent		\textbf{Keywords:} Digital Health; Uncertainty Quantification; Metric Spaces; Conformal Prediction; Fréchet Mean.

	\section{Introduction}

	\subsection{Motivation and goal}	
\noindent	
The increasing use of statistical and machine-learning algorithms is transforming healthcare, finance, and digital markets \citep{Banerji2023,HAMMOURI2023,rodriguez2022contributions}. It has become more critical than ever to conduct uncertainty analysis to create trustworthy predictive models and validate their usefulness \citep{romano2019conformalized}. Data analysts tend to focus on pointwise estimation through the conditional mean between a response and a set of predictors, neglecting other crucial aspects of the conditional distribution between the involved random variables \citep{kneib2021rage}. In order to quantify the uncertainty of the point estimates, we need to estimate other characteristics of the conditional distribution beyond the conditional mean.

\noindent An innovative approach to tackle uncertainty quantification problems involves employing the conformal inference framework, first introduced by Gammerman, Vovk, and Vapnik  (see \citep{alex1998learning}). Building upon this foundation,  Vovk, Gammerman, and Shafer  have made significant contributions to the field \citep{vovk2005algorithmic}. Conformal inference allows one to construct prediction sets with non-asymptotic guarantees, setting it apart from other methods that only offer asymptotic guarantees. 

\noindent However, despite its attractive properties and advantages, conformal inference does have some limitations that include:

\begin{enumerate}
	\item \textbf{Computational Complexity:} The implementation of conformal inference methods can be computationally expensive, especially if data-splitting strategies to estimate the underlying regression model and prediction region (referred to as \emph{split-conformal} in the literature) are not considered \citep{vovk2018cross, solari2022multi}.
	
	\item \textbf{Conservative Intervals:} The method tends to produce conservative intervals if the regression function used is not well-calibrated with respect to the underlying model, or if the conformal inference method is misspecified \citep{lu2023data} (e.g., using a homoscedastic model when the underlying conditional distribution function between the random response variable $Y$ and the random predictor $X$ is governed by heteroscedastic random errors).
	
	\item \textbf{Limited Applicability:} Conformal inference may have limited applicability, particularly in settings with multivariate responses or when the observed data is extracted from complex survey designs where the exchangeability hypothesis is violated (see \cite{barber2022conformal}, who quantify the impact of this fact in terms of the total variation distance between distributions).
	
	\item \textbf{Asymptotic Guarantees:} In some special cases, other methods may provide stronger asymptotic guarantees (see \cite{gyofi2020nearest} for a discussion).
\end{enumerate}

\noindent Related to this, a fascinating and still under--explored question in the predictive--inference literature is how conformalizing an algorithm impacts its statistical efficiency. Vovk dubbed this general mathematical challenge the Burnaev--Wasserman program (see \cite{vovk2022efficiency}). Evidence from scalar--response settings and certain heteroscedastic settings suggests that conformal prediction can underperform in terms of conditional coverage calibration; for instance, this occurs when using bootstrap-predictive methods \cite{zhang2022regression}. This limitation becomes even more pronounced with high-dimensional responses or when the goal is to predict metric-space-valued responses. In such situations, the quality of the estimates tends to degrade more quickly. Consequently, the requirement of non--asymptotic coverage guarantees typically yields more conservative prediction regions. As a result, conformal prediction regions may substantially deviate from the population--prediction region, leading to a marked reduction in statistical efficiency for finite samples in terms of conditional coverage. Recently, several methods have been proposed to improve conditional calibration in conformal prediction (see for example \cite{Conformal_prediction_with, Sample-Conditional_Coverage}). Nevertheless, existing approaches introduce practical constraints: they are typically designed for linear spaces, may suffer in high-dimensional settings, require smoothness assumptions, and can be computationally demanding in their full conformal versions. A key open problem is therefore to develop methods that provide conditional guarantees in high-dimensional or non-Euclidean spaces under minimal smoothness conditions.

\noindent In the context of personalized and digital medicine, clinical outcomes can take on complex statistical forms as random objects, such as probability distributions or graphs \citep{rodriguez2022contributions}. For instance, in the case of a glucose time series, a modern approach to summarizing the glucose profiles involves using a distributional representation of the time series, such as their quantile functions. As prior research has shown, these new representations \citep{ghosal2023multivariate} can capture information about glucose homeostasis metabolism that traditional diabetes biomarkers cannot measure \citep{doi:10.1177/0962280221998064, matabuena2022kernel, 10.1093/biostatistics/kxab041}.

\noindent For all these reasons, there is a need for new predictive inference methods for regression models defined in metric spaces. To the best of our knowledge, the only existing approach that directly addresses predictive uncertainty quantification is that of \citet{zhou2024conformalinferencerandomobjects}, which was posted on arXiv concurrently with an earlier version of the present work \citep{lugosi2024uncertainty}. Zhou and Müller introduced a model--free conformal--inference procedure based on a conditional data depth--called the \emph{distance profile}. Their paper is a generalization to metric spaces of a scalar distributional split--conformal method of \cite{chernozhukov2021distributional}. However, the scalar procedure of \cite{chernozhukov2021distributional} relies on estimating the conditional distribution of several quantities and achieves fast rates only when that distribution can be well approximated by parametric or semiparametric models--an assumption rarely tenable in high--dimensional or metric--response problems. To alleviate this difficulty, the method of \cite{zhou2024conformalinferencerandomobjects} imposes  differentiability conditions that may limit applicability in discrete
or otherwise non-smooth settings. They also require data splits to preserve finite--sample conformal guarantees, and their theory is limited to a single univariate Euclidean covariate, leading to slow convergence in some settings.

\noindent In multivariate settings, the method continues to depend on kernel-smoothing estimators. However, to mitigate the curse of dimensionality, it aggregates all predictors into a single--index model, which increases the rigidity of the final model. Moreover, its reliance on smoothness assumptions limits applicability to mixed continuous and discrete covariates, as commonly encountered in physical--activity monitoring or survey--based medical studies.

\noindent Motivated by these shortcomings and by the loss of efficiency
 observed in existing conformal algorithms in high-dimensional  spaces, we propose a general uncertainty--quantification framework for regression on separable metric spaces (see \cite{AIHP_1948__10_4_215_0, petersen2019frechet, schotz2021frechet}). Noting that conformal prediction incurs a smaller efficiency loss under homoscedasticity, we introduce a new notion of homoscedasticity tailored to metric spaces and design conformal algorithms that estimate the regression function directly in that setting, attaining fast convergence rates. For heteroscedastic data, we propose a local $k$NN method that adapts naturally to non--linear geometric structure and scales to large datasets. Both procedures work with any regression algorithm, remain computationally efficient for large samples, and are shown to be consistent under some assumptions--often weaker than those required in the Euclidean literature, which is a special case of our framework.  The methods can also  operate under minimal conditions for dependent data, as we show in our variant for time series modeling.  We demonstrate the practical value of the methodology in personalized medicine applications where the responses are probability distributions or Laplacian graphs.

\noindent	Below, we introduce the notation and mathematical concepts to define the new uncertainty quantification framework.

	\subsection{Notation and problem definition}\label{sec:def}
	\noindent Let $(X,Y)\in \mathcal{X} \times \mathcal{Y}$ be a pair of random variables that play the role of the predictor and response variable in a regression model. Our predictive uncertainty quantification algorithms handle both predictors and responses in general metric spaces. However, for simplicity, we assume that	$\mathcal{X}= \mathbb{R}^{p}$ and $\mathcal{Y}$ is a  separable metric space equipped with a distance $d_1$.  We assume that there exists $y\in \mathcal{Y}$ such that $\mathbb{E}(d_{1}^{2}(Y,y)) < \infty$. The regression function $m$ is defined as
	
	\begin{equation}
	m(x)= \argminB_{y\in \mathcal{Y}} \mathbb{E}(d_{1}^{2}(Y,y)\mid X=x),
	\end{equation}
	
	\noindent	where $x\in \mathbb{R}^{p}$.  In other words, $m$ is the conditional Fréchet mean \citep{petersen2019frechet}. For simplicity, we assume that the minimum in $(1)$ is achieved for each $x$ and moreover, the conditional Fréchet mean of $Y$ given $X=x$, is unique. We note here that one may also consider the conditional Fréchet median obtained by $\argminB_{y\in \mathcal{Y}}\mathbb{E}(d_{1}(Y,y)\mid X=x).$ However, this is a special case of our setup obtained by replacing the metric $d_{1}$, by $\sqrt{d_{1}}$ (which is also a metric).

	\noindent	 Suppose that $\mathcal{Y}$ is also equipped with another distance $d_{2}$. We may have $d_{2}= d_{1}$, but in some cases it is convenient to work with two distances.
	
     \noindent For $y\in \mathcal{Y}$ and $r\geq 0$, denote by $\mathcal{B}(y,r):= \{z\in \mathcal{Y};\hspace{0.1cm} d_{2}(y,z)\leq r \}$, the closed ball of center $y\in \mathcal{Y}$, and radius $r$.

    \begin{remark}
$d_{1}$ denotes the metric used in the loss function used to obtain the regression function $m$, and $d_{2}$, possibly distinct from $d_{1}$, denotes the metric employed to derive the final prediction region. Allowing $d_{1}\neq d_{2}$ offers additional modeling flexibility: $d_{1}$ can capture the application--specific geometry of the response space, whereas selecting a simpler metric for $d_{2}$--for example, the supremum norm--often simplifies both the computation and the visualization of the resulting prediction sets, particularly in high-- or infinite‑-dimensional settings.

	\end{remark}

\noindent We now introduce the notion of \emph{homoscedasticity} for the random pair \((X,Y)\). The differentiation between homoscedasticity and heteroscedasticity is critical for our subsequent analysis. Based on the notion of \emph{homoscedasticity}, we can obtain  conformal prediction algorithms that are statistically efficient in general  metric spaces.

	\begin{definition} \label{def:homoc} We say that the distribution of $(X,Y)$ is homoscedastic with respect to the regression function $m$ if there exists a function $\phi: [0,\infty)\to [0,1]$ such that for all $x\in \mathbb{R}^{p}$ and $r\geq 0$, 
		$$\mathbb{P}(Y\in \mathcal{B}(m(x), r)\mid X=x)= \phi(r).$$ 
	\end{definition}

	\noindent To better motivate our definition of homoscedasticity, let us consider the following examples. The first example is the most natural and commonly encountered in the statistical literature. In this case, the notion of homoscedasticity aligns with the traditional definition.
	
	\noindent However, the second example is non-trivial and serves to illustrate that even in a space without a vector-space structure, it is still possible to have statistical models that fall under the homoscedastic regime according to our definition.
	
	\begin{example}
Suppose $\mathcal{X} = \mathbb{R}^p$, $\mathcal{Y} = \mathbb{R}^m$, and  
\[
d_1(\cdot,\cdot) = d_2(\cdot,\cdot) = \|\cdot - \cdot\|_2,
\]
where $\|\cdot\|_2$ denotes the Euclidean norm on $\mathbb{R}^m$.  
Consider the regression model  
\begin{equation}
Y = f(X) + \epsilon,
\end{equation}
where $\epsilon$ is a random vector taking values in $\mathbb{R}^m$, independent of $X$, such that  $\mathbb{E}[\epsilon] = 0$. Then,
\[
\mathbb{P}\bigl( d_2(Y, m(x)) \leq r \mid X = x \bigr)
= \mathbb{P}\bigl( \|\epsilon\|_2 \leq r \mid X = x \bigr)
= \phi(r),
\]
where the second equality follows from the independence of $\epsilon$ and $X$.  
Therefore, the model is homoscedastic.
\end{example}

\begin{example}\label{ejempldos}

	\noindent Consider $\mathcal{X} = \mathbb{R}^{p}$ and $\mathcal{Y} = \mathcal{W}_{2}(\mathbb{R})$, where $\mathcal{W}_{2}(\mathbb{R})$ denotes the 2-Wasserstein space (see \cite{ panaretos2020invitation}). More specifically,  we define in view of our generative example $\mathcal{W}_{2}(\mathbb{R}) = \{ F \in \Pi(\mathbb{R})\}$, where $\Pi(\mathbb{R})$ is the set of distribution functions over $\mathbb{R}$ with a finite number of discontinuities.	A natural metric for $\mathcal{W}_{2}(\mathbb{R})$ is $d_{\mathcal{W}_{2}}(F,G)$, is defined as 
	\begin{equation}
	d_{\mathcal{W}_{2}}(F,G) = \sqrt{\int_{0}^{1} (Q_{F}(t) - Q_{G}(t))^{2} dt}.
	\end{equation}

\noindent	Here, $Q_{F}$ and $Q_{G}$ denote the quantile functions of the distribution functions $F$ and $G$, respectively. Define the discrete-time stochastic process $Z(\cdot)$ as
\begin{equation*}
Z(j) = g(X) + \epsilon(j), \quad j = 1, \dots, n,
\end{equation*}
\noindent where $X$ is a random vector taking values in $\mathbb{R}^{p}$, $g: \mathbb{R}^{p} \to \mathbb{R}$ is a continuous function, and for each $t = 1, \dots, n$, $\epsilon(t) \sim \mathcal{N}(0, \sigma^{2}_{\epsilon})$, with $\epsilon(j) \perp \epsilon(j')$ for $j \neq j'$, where $\perp$ denotes the statistical independence between two random variables, and $\epsilon(1),\dots, \epsilon(n)$ are independent of $X$.
	
\noindent	Then, we define the element $F_{n} \in \mathcal{W}_{2}(\mathbb{R})$ as
	\begin{equation}
	F_{n}(t) = \frac{1}{n} \sum_{j=1}^{n} \mathbb{I}\{Z(j) \leq t\}.
	\end{equation}

	
	
	

\noindent	
For $\rho\in(0,1]$, define
\[
Q_n(\rho)=\inf\{t:F_n(t)\geq \rho\}.
\]
If $\epsilon_{(1)}\leq\cdots\leq\epsilon_{(n)}$ are the order statistics of
$\epsilon(1),\ldots,\epsilon(n)$, then
\[
Q_n(\rho)
=
g(X)+\epsilon_{(\lceil n\rho\rceil)}.
\] The expectation of the quantile function can be easily derived as:
\begin{equation}
\mathbb{E}(Q_{n}(\rho)\mid X) = g(X) + \sigma_{\epsilon} h_{n}(\rho),
\end{equation}

\noindent where $h_{n}(\cdot)$ is a function that depends on the sample size $n$.  

\noindent By the definition of $d^{2}_{\mathcal{W}_{2}}(\cdot,\cdot)$, $\left[m(X)\right]^{-1}(\rho)= \mathbb{E}\left[Q_{n}(\rho)|X\right]$. To see this, we note that

\begin{equation}
m(x)= \argminB_{y\in \mathcal{W}_{2}(\mathbb{R})} \mathbb{E}(d^{2}_{\mathcal{W}_{2}}(F_{n},y)\mid X=x)=  \argminB_{y\in \mathcal{W}_{2}(\mathbb{R})} \mathbb{E}\left( \int_{0}^{1}\left(Q_{n}(t)-y^{-1}(t))^{2}dt\right)\mid X=x\right) = \mathbb{E}\left[Q_{n}\mid X=x\right]^{-1},
\end{equation}

\noindent where, in the final step, we use the property that the mean operator minimizes the square distance in Euclidean space. Then, the squared Wasserstein distance between $F_n$ and its conditional
Fr\'echet mean $m(X)$ is
\[
d_{\mathcal{W}_2}^2(F_n,m(X))
=
\int_0^1
\left[
\epsilon_{(\lceil n\rho\rceil)}
-
\sigma_\epsilon h_n(\rho)
\right]^2
\,\mathrm{d}\rho,
\]
which does not depend on $X$.
\end{example}

	\noindent We say that the distribution of $(X,Y)$ is heteroscedastic with respect to the regression function $m$  when
	$$\mathbb{P}(Y\in  \mathcal{B}(m(x),r)|X=x)$$ 
	\noindent	 may depend on  $x$.
		
	\noindent	Given a random observation $X$, and confidence level $\alpha\in [0,1]$, our  goal is  to estimate a prediction region $C^{\alpha}(X)\subset \mathcal{Y}$ of the response variable $Y$  that  contains the response variable with probability at least $1-\alpha$. First, we  introduce the population version of the  prediction region.

	\begin{definition}
	\noindent Define the oracle-prediction region as   
		\begin{equation} 
		C^{\alpha}(x):= \mathcal{B}(m(x),r(x)),
		\end{equation}
		\noindent where $r(x)$ is the smallest number $r$ such that $\mathbb{P}(Y\in \mathcal{B}(m(x),r)|X=x)\geq 1-\alpha$. 
	\end{definition}

\noindent In this work, we define the prediction region as a ball in the output space, which naturally serves as a level set in a general metric space. We acknowledge that this choice restricts level sets to isotropic regions (with respect to the geometry induced by the metric $d_2$).

	\noindent In order to estimate	$ C^{\alpha}(x)$, except in the specific section for dependent data, we suppose that we observe a random sample $\mathcal{D}_{n}=\{ (X_i,Y_i)\in \mathcal{X}\times \mathcal{Y}: i\in [n]:= \{1,2,\dots,n\}\}$ containing independent, identically distributed (i.i.d.) pairs drawn from the same distribution as $(X,Y)$.  In the rest of this paper, we  split  $\mathcal{D}_{n}$ into two disjoint subsets,  $\mathcal{D}_{n}=\mathcal{D}_{train} \cup \mathcal{D}_{test}$, in order to estimate the center and radius of the ball with two independent random samples.   We denote the set of indexes $[S_1]:= \{ i\in [n]:  \left(X_i,Y_i\right)\in   \mathcal{D}_{train} \}$, $[S_2]:= \{ i\in [n]:  (X_i,Y_i)\in   \mathcal{D}_{test} \}$, and we denote  $|\mathcal{D}_{train}|= n_1$ and $|\mathcal{D}_{test}|= n_2= n-n_1$.	
	
	\noindent	We propose estimators of the oracle-prediction region $C^{\alpha}(x)$ of the form $\widetilde{C}^{\alpha}(x)=\mathcal{B}(\widetilde{m}(x),\widetilde{r}_{\alpha}(x))$, where $\widetilde{m}(x)$ and $\widetilde{r}_{\alpha}(x)$ are estimators of $m(x)$ and $r_{\alpha}(x)$.


	\begin{definition}
	Consider a measurable mapping $C: \mathcal{X} \to 2^{\mathcal{Y}}$. We define the error of $C$ with respect to the  oracle-prediction region as
		
		\begin{equation*}
		\varepsilon(C,x)= \mathbb{P}(Y\in C(X)\triangle C^{\alpha}(X)\mid X=x), 
		\end{equation*}

		\noindent where $C(X)\triangle C^{\alpha}(X)$ denotes the symmetric difference between the sets $C(X)$ and $C^{\alpha}(X)$.

		\noindent	If $C= \widetilde{C}^{\alpha}$, is constructed from $\mathcal{D}_{n}$, then  
		
		\begin{equation*}
		\varepsilon(\widetilde{C},x)= \mathbb{P}(Y\in \widetilde{C}^{\alpha}(X)\triangle C^{\alpha}(X)\mid X=x,\mathcal{D}_{n}).	
		\end{equation*} 
		
		\noindent We are interested in the integrated error

		\begin{equation*}
		\mathbb{E}(\varepsilon(\widetilde{C}^{\alpha},X)|\mathcal{D}_{n}).
		\end{equation*}

		\noindent We say that $\widetilde{C}^{\alpha}$ is consistent if

		\begin{equation}
		\lim_{n\to \infty} \mathbb{E}(\varepsilon(\widetilde{C}^{\alpha},X)|\mathcal{D}_{n})\to 0  \text{ in probability. }
		\end{equation}
	\end{definition}

	\subsection{Contributions}


This paper develops a general framework for uncertainty quantification for random objects in metric spaces. The framework constructs prediction regions without imposing smoothness assumptions, aims to achieve good conditional calibration \cite{Conformal_prediction_with}, and provides finite-sample guarantees in some special cases.   Now, we summarize  the main contributions of this paper:

    \begin{enumerate}
\item In the Euclidean setting, the performance and applicability of prediction methods are strongly influenced by the signal--to-noise ratio of the regression model $m$. We propose a novel notion of homoscedasticity. Based on this:
		
        We propose three uncertainty quantification algorithms: one for the homoscedastic case, another for the general case of independent data, including heteroscedasticity, and a third for temporally correlated time-series sequences defined on metric spaces.
		
        \begin{enumerate}
			\item  Homoscedastic case: We propose a natural generalization of split conformal inference techniques \citep{vovk2018cross, solari2022multi} in the context of metric space regression algorithms. We obtain non-asymptotic guarantees of marginal coverage of the type $\mathbb{P}(Y\in\widetilde{C}^{\alpha}(X))\geq 1-\alpha$. In this setting, we also show that $\widetilde{C}$ is consistent,	under mild assumptions, which only require the consistency of the conditional Fréchet mean estimator, i.e., $\mathbb{E}(d_{2}(\widetilde{m}(X),m(X))| \mathcal{D}_{train})\to 0$ in probability as $|\mathcal{D}_{train}|\to \infty$. We also establish  bounds for the convergence rate of the derived prediction region.
		
			\item  Heteroscedastic case: We introduce a local $k$NN method for predictive inference in metric spaces. Although the resulting prediction regions do not enjoy finite-sample marginal coverage guarantees, we establish their consistency and derive finite-sample upper bounds on their expected error. Because the performance of the method is sensitive to the neighborhood size $k$, we propose a data-driven procedure for selecting it. The resulting algorithm provides good conditional coverage behavior while producing statistically efficient prediction regions across a range of practical settings.

\item Building on the previous $k$NN approach, we propose a novel expert-aggregation method for metric-space-valued time series, motivated by online learning theory \citep{cesa2006prediction,biau2011sequential}. Under stationarity and ergodicity, we establish the statistical consistency of the proposed procedure under mild regularity conditions. In addition, the expert-based approach is flexible in practice and avoids the smoothness assumptions typically required by nonlinear time-series methods.

			\item The prediction regions are the generalization of the concept of conditional quantiles for metric space responses. The prediction regions are defined by a center function $m$ and a specific radius $r$ corresponding to a chosen probability level, $\alpha\in [0,1]$. This extension builds upon recent advances in the theory of unconditional quantiles for metric spaces \cite{liu2022quantiles} and provides a natural generalization of the existing concepts.
			
		\end{enumerate}

Arguably, the proposed uncertainty quantification framework, built around
metric balls, offers a natural and broadly applicable alternative to existing
depth-profile methodologies \citep{zhou2024conformalinferencerandomobjects}.
In particular, it extends naturally to settings involving causal inference,
non-Euclidean predictors, and high-dimensional Euclidean spaces.

Computationally, after estimating the regression function \(m\), the proposed
algorithms can process millions of observations in a few seconds. By contrast,
depth-profile methods require repeated local polynomial smoothing and are
practically limited to moderately sized datasets, with reported implementations
handling samples of size around \(n \approx 4{,}000\) only after several hours
of computation. This makes them less suitable for large--scale applications.
Moreover, these methods do not provide fully data-driven procedures for
selecting the smoothing bandwidth \(h\).

Our framework also operates under substantially weaker assumptions. Whereas depth-profile approaches require regression estimators with multiple orders of differentiability and introduce additional smoothing steps, our method avoids these requirements while maintaining good conditional-coverage calibration. In this sense, the proposed framework extends recent ideas on local conditional calibration in conformal prediction \citep{Conformal_prediction_with,Sample-Conditional_Coverage} to metric-space settings through a data-adaptive class of metric neighborhoods, without relying on smoothing-based arguments.


	\item We evaluate our framework on real--world biomedical datasets, demonstrating its ability to handle complex, high--dimensional non--Euclidean structures. These experiments highlight its suitability for modern digital--health and personalized-medicine applications. The different  applications involve probability distributions with the $2$--Wasserstein metric, fMRI data with graph Laplacians, and multivariate Euclidean data.


	\end{enumerate}

\subsection{General view of existing uncertainty quantification methods}\label{sec:summary}

\noindent In recent years, uncertainty quantification became an  active research area  \citep{geisser2017predictive, politis2015model}. The impact of uncertainty quantification on data-driven systems has led to a remarkable surge of interest in both applied and theoretical domains. These works address fundamental questions of uncertainty quantification in statistical science and beyond, such as in biomedicine \citep{HAMMOURI2023, Banerji2023}.

\noindent Geisser's pioneering book develops a mathematical theory of prediction inference \cite{geisser2017predictive}. Building upon Geisser's foundations, Politis presented a comprehensive methodology that effectively applies resampling techniques \citep{politis2015model}. Additionally, the  book of Vovk, Gammerman,  and Shafer \cite{vovk2005algorithmic} (see  recent reviews \citep{angelopoulos2021gentle,fontana2023conformal}) 
has been influential.

\noindent One of the most widely used and robust frameworks for quantifying uncertainty in statistical and machine learning models is conformal inference \citep{shafer2008tutorial}. The central idea of conformal inference is rooted in the concept of exchangeability \citep{kuchibhotla2020exchangeability}. For simplicity, we assume that the data are independent and identically distributed (i.i.d.).
\noindent Now, we present a general overview of conformal inference methods for regression models with scalar responses. Consider the sequence $\mathcal{D}_{n} = \{(X_i, Y_{i})\}^{n}_{i=1}$ of i.i.d. random variables. Given a new i.i.d. random pair $(X, Y)$, conformal prediction, as introduced by \cite{vovk2005algorithmic}, provides a family of algorithms for constructing prediction intervals independently of the regression algorithm used.

\noindent Fix any  regression algorithm 
\[\alg: \ \cup_{n\geq 0} \left(\Xcal\times \R\right)^n \ \rightarrow \
  \left\{\textnormal{measurable functions $\widetilde{m}: \Xcal\rightarrow\R$}\right\}, 
\] 
\noindent which maps a data set containing any number of pairs $(X_i,Y_i)$, to a fitted
regression function $\widetilde{m}$. The algorithm $\alg$ is required to treat
data points symmetrically, i.e.,

\begin{equation}\label{eqn:alg_symmetric}
\alg\big((x_{\pi(1)},y_{\pi(1)}),\dots,(x_{\pi(n)},y_{\pi(n)})\big) =
\alg\big((x_1,y_1),\dots,(x_n,y_n)\big)
\end{equation}
for all $n \geq 1$, all permutations $\pi$ on $[n]=\{1,\dots,n\}$, and all
$\{(x_i,y_i)\}_{i=1}^{n}$. Next, for each $y\in\R$, let \[
\widetilde{m}^{y} = \alg\big((X_1,Y_1),\dots,(X_n,Y_n),(X,y)\big)
\]
denote the trained model, fitted to the training data together with a
hypothesized test point $(X,y)$, and let  
\begin{equation}\label{eqn:R_y_i}
R^y_i = 
\begin{cases}|Y_i - \widetilde{m}^{y}(X_i)|, & i=1,\dots,n,\\ 
|y-\widetilde{m}^{y}(X)|, & i=n+1.
\end{cases}
\end{equation}
The prediction interval for $X$ is then defined as 
\begin{equation}\label{eqn:def_fullCP}
\widetilde{C}^{\alpha}(X;\mathcal{D}_{n}) =\left\{y \in\R \ : \ R^y_{n+1}\leq
  \quant_{1-\alpha}\left(\sum_{i=1}^{n+1} \tfrac{1}{n+1} \cdot \delta_{R^y_i}
  \right)\right\}, 
\end{equation}

\noindent  where $\quant_{1-\alpha}\left(\sum_{i=1}^{n+1} \tfrac{1}{n+1} \cdot \delta_{R^y_i}
  \right)$ denotes the quantile  of order $1-\alpha$ of the empirical distribution $\sum_{i=1}^{n+1} \tfrac{1}{n+1} \cdot \delta_{R^y_i}$.

\noindent The full conformal method is known to guarantee distribution-free
predictive coverage at the target level $1-\alpha$: 

\begin{theorem}[Full conformal prediction]\label{thm:background_fullCP} 
\cite{vovk2005algorithmic} If the data points $(X_1,Y_1),\dots,(X_n,Y_n),(X,Y)$ are i.i.d.\ (or
more generally, exchangeable), and the algorithm $\alg$ treats the input data
points symmetrically as in~\eqref{eqn:alg_symmetric}, then the full conformal
prediction set defined in~\eqref{eqn:def_fullCP} satisfies 
\[
\mathbb{P}(Y\in \widetilde{C}^{\alpha}(X; \mathcal{D}_{n})) \geq 1-\alpha.
\]
\end{theorem}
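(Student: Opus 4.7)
The plan is to exploit the exchangeability of the residual vector $(R^{Y}_1,\dots,R^{Y}_{n+1})$ induced by the symmetry condition \eqref{eqn:alg_symmetric}, and then apply a standard rank argument for empirical quantiles.

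First, I would fix the hypothesized label at the (unobserved) true response, namely $y=Y$, so that the augmented sample $Z_1,\dots,Z_{n+1}$, with $Z_i=(X_i,Y_i)$ for $i\le n$ and $Z_{n+1}=(X,Y)$, is i.i.d.\ (hence exchangeable). By \eqref{eqn:alg_symmetric}, the fitted model $\widetilde{m}^{Y}=\alg(Z_1,\dots,Z_{n+1})$ is a symmetric function of $(Z_1,\dots,Z_{n+1})$. Consequently each residual $R^{Y}_i$ in \eqref{eqn:R_y_i} can be written as $f(Z_i;Z_1,\dots,Z_{n+1})$ for a single map $f$ that is symmetric in its last $n+1$ arguments. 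Permuting the $Z_i$'s therefore permutes the $R^{Y}_i$'s in the same way, so the residual vector inherits exchangeability from the augmented sample.

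Second, I would invoke the standard quantile inequality for exchangeable real-valued variables: if $(V_1,\dots,V_N)$ is exchangeable, then (breaking ties in favor of the test index if necessary) the rank of $V_N$ among the $V_i$'s is stochastically bounded by a uniform variable on $\{1,\dots,N\}$, and hence
$$\mathbb{P}\left(V_N\le \quant_{1-\alpha}\!\left(\tfrac{1}{N}\sum_{i=1}^{N}\delta_{V_i}\right)\right)\ge 1-\alpha.$$
Applying this with $N=n+1$ and $V_i=R^{Y}_i$ gives
$$\mathbb{P}\left(R^{Y}_{n+1}\le \quant_{1-\alpha}\!\left(\sum_{i=1}^{n+1}\tfrac{1}{n+1}\delta_{R^{Y}_i}\right)\right)\ge 1-\alpha.$$
By the definition \eqref{eqn:def_fullCP}, the event on the left is exactly $\{Y\in \widetilde{C}^{\alpha}(X;\mathcal{D}_n)\}$, which is the desired inequality.

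The main subtle point is the exchangeability of the residual vector $(R^{Y}_1,\dots,R^{Y}_{n+1})$. This crucially depends on all residuals being computed against the \emph{same} fitted function $\widetilde{m}^{Y}$, which in turn is trained on the augmented sample including the test point $(X,Y)$; this is precisely the feature that separates full conformal prediction from split conformal prediction, and without it the symmetric-function argument collapses. The remaining steps, such as handling ties and passing from the residual inequality to set membership, are routine once exchangeability is in place.
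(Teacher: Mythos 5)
Your proof is correct, and it is essentially the standard argument for this cited background result (the paper itself does not reprove Theorem~\ref{thm:background_fullCP}, deferring to \cite{vovk2005algorithmic}). You correctly identify the two load-bearing ideas: that plugging in $y=Y$ makes the augmented sample $Z_1,\dots,Z_{n+1}$ exchangeable, and that the symmetry condition \eqref{eqn:alg_symmetric} lets each residual be written as $f(Z_i;Z_1,\dots,Z_{n+1})$ with $f$ symmetric in its last $n+1$ slots, so a permutation of the $Z_i$'s induces the same permutation of the $R^{Y}_i$'s, giving exchangeability of the residual vector; from there the rank-of-the-test-residual bound does the rest. You also correctly flag the subtle point that this works only because all $n+1$ residuals are computed against the \emph{same} fitted function $\widetilde{m}^{Y}$, which is what distinguishes full conformal from split conformal. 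One small cosmetic note: the ``breaking ties in favor of the test index'' remark is unnecessary for the one-sided $\geq 1-\alpha$ bound (ties can only help, since the quantile of the empirical measure that includes $R^{Y}_{n+1}$ is defined so that the event $\{R^{Y}_{n+1}\leq \quant_{1-\alpha}\}$ always holds whenever the rank of $R^{Y}_{n+1}$ is at most $\lceil(1-\alpha)(n+1)\rceil$); tie-breaking randomization is needed only to obtain a matching upper bound of $1-\alpha+\frac{1}{n+1}$.
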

\noindent A similar result holds  for split conformal methods, which involve estimating the regression function on $\mathcal{D}_{\text{train}}$ and the quantile on $\mathcal{D}_{\text{test}}$.

\noindent Conformal inference (both split and full) was
initially proposed in terms of ``nonconformity scores''
\smash{$\widetilde{S}(X_i,Y_i)$}, where \smash{$\widetilde{S}$} is a fitted function
that measures the extent to which a data point $(X_i,Y_i)$ is unusual relative to
a training data set $\mathcal{D}_{train}$. We have only presented the
most commonly used nonconformity score, which is the residual from the fitted
model.
\begin{equation}\label{eqn:standard_nonconformity_score}
\widehat{S}(X_i,Y_i) := |Y_i - \widetilde{m}(X_i)|,
\end{equation}
\noindent where $\widetilde{m}$ is the estimator of the regression function, estimated using random elements from the first split, $\mathcal{D}_{train}$.

\noindent 

\noindent \cite{foygel2021limits} investigated the case of the probability of conditional coverage with respect to a specific point $x \in \mathcal{X}$, which means that

\begin{equation}\label{eqn:cond}
 \mathbb{P}(Y \in \widetilde{C}^{\alpha}(X; \mathcal{D}_{n}) \mid X=x)\geq 1-\alpha.
\end{equation}

\noindent The standard finite-sample conformal guarantee is marginal (averaging over training and future data). \cite{vovk2012conditional} gave a PAC-type training-conditional guarantee for inductive conformal predictors: with calibration size $n_{\mathrm{cal}}$, for any $\delta\in(0,1)$,
\[
\mathbb{P}_{\mathcal D_n}\!\left[ \mathbb{P}\!\left\{ Y\notin\widetilde C^\alpha(X;\mathcal D_n) \mid \mathcal D_n \right\} \leq \alpha + \sqrt{\frac{\log(1/\delta)}{2n_{\mathrm{cal}}}} \right] \geq 1-\delta.
\]
Equivalently, with probability $\geq 1-\delta$ over the training/calibration data, conditional coverage is at least $1-\alpha - \sqrt{\log(1/\delta)/(2n_{\mathrm{cal}})}$. Thus, inductive conformal prediction provides approximate PAC-type training-conditional validity, not an exact conditional coverage guarantee for each realized training sample.

\noindent In the general context of i.i.d. and beyond the conformal inference framework, within the asymptotic regime, \cite{gyofi2020nearest} presented a kNN-based uncertainty quantification algorithm conditioned on both: i) a random sample $\mathcal{D}_{n}$ and ii) covariate characteristics. This algorithm possesses the property that 

\begin{equation*}
\lim_{n\to \infty} \mathbb{P}(Y \in \widetilde{C}^{\alpha}(X; \mathcal{D}_{n})| \mathcal{D}_{n}, X=x)= 1-\alpha \text{ in probability}.
\end{equation*}

\noindent The work of \cite{gyofi2020nearest} is particularly relevant as it provides optimal non-parametric rates for $k$NN-based uncertainty quantification. Several authors  established non-asymptotic guarantees for the coverage outlined in equation \eqref{eqn:cond}. However, in general, this is impossible without incorporating strong assumptions about the joint distribution of $(X, Y)$ or possessing exact knowledge of the distribution \citep{foygel2021limits, Conformal_prediction_with}.

\noindent \cite{lei2014distribution} show that forcing a prediction algorithm to satisfy the condition in \eqref{eqn:cond} can cause the Lebesgue measure of the estimated prediction region to become unbounded $\widetilde{C}^{\alpha}(X; \mathcal{D}_{n})$. More specifically, they prove that, for any continuous probability law $P,$

\begin{equation*}
\lim_{n \to \infty} \mathbb{E}[\text{Leb}(\widetilde{C}^{\alpha}(X; \mathcal{D}_{n}))] = \infty,
\end{equation*}

\noindent where $\text{Leb}$ denotes the Lebesgue measure.

\paragraph{Existing methods for random objects in metric spaces}

The only existing uncertainty quantification algorithm with a comparable goal is the conformal approach proposed by \citet{zhou2024conformalinferencerandomobjects}. Their procedure constructs prediction regions for responses that reside in a general metric space, but it is limited to scalar predictors (or, in the multivariate setting, to a single-index projection step). The method relies on a two-stage local polynomial estimation of conditional distribution functionals, and its precision is sensitive to the bandwidth parameter~$h$. As a result, it does not extend smoothly to non-Euclidean predictors, non--differentiable models, or high--dimensional covariate spaces. The core steps of the algorithm proposed by \citep{zhou2024conformalinferencerandomobjects} are outlined in Algorithm~\ref{alg:confc}. To the best of our knowledge, general methods for uncertainty quantification in metric spaces under dependent data remain largely undeveloped. Our work contributes to this literature by proposing a flexible framework that provides conditional calibration guarantees for random objects in metric spaces while avoiding smoothing assumptions \cite{Conformal_prediction_with}.

Existing approaches are typically not designed to handle non-Euclidean responses or predictors and often rely on additional regularity conditions. In contrast, our kNN-based implementation is computationally efficient, broadly applicable, and operates under minimal assumptions. Moreover, numerical experiments demonstrate that it is highly competitive in practice.

\begin{algorithm}[ht]
  \caption{{Conformal prediction for random responses in metric spaces and scalar predictors \citep{zhou2024conformalinferencerandomobjects}}}
  \label{alg:zhou}
  \begin{algorithmic}[1]
    \Require Confidence level $\alpha\in(0,1)$, kernel $K_h$, split fraction $\tau\in(0,1)$.
    \State \textbf{Data split.} Randomly split $\mathcal{D}_{n}=\{(X_i,Y_i)\}_{i=1}^n\subset \mathbb{R}\times \mathcal{Y}$ into
           training set $\mathcal D_{\mathrm{train}}$ of size $\lfloor\tau n\rfloor$
           and calibration set $\mathcal D_{\mathrm{test}}$.
    \State \textbf{Conditional distance profiles.}  For each $x\in\mathcal X$ and
           $\omega\in\mathcal \mathcal{Y}$, estimate
      \[
        \widetilde F_{\omega,x}(t)=
        \frac{\sum_{i\in\mathcal D_{\mathrm{train}}}K_h(x-X_i)\,\mathbb I\{d(\omega,Y_i)\le t\}}
             {\sum_{i\in\mathcal D_{\mathrm{train}}}K_h(x-X_i)}, \qquad t\ge0.
      \]

\State \textbf{Average transport cost.}
\[
\widetilde C(\omega\mid x)
=
\mathbb E\left[
\left.
\int_0^\infty
\left|
\widetilde F_{\omega,x}(t)
-
\widetilde F_{Y,x}(t)
\right|
\,dt
\;\right|\;
X=x
\right].
\]

    \State \textbf{Conditional profile score.}
      \[
        \widetilde S(z\mid x)=
        \frac{\sum_{i\in\mathcal D_{\mathrm{train}}}K_h(x-X_i)\, \mathbb I\{\widetilde C(Y_i\mid X_i)\le z\}}
             {\sum_{i\in\mathcal D_{\mathrm{train}}}K_h(x-X_i)}, \qquad z\ge0.
      \]
    \State \textbf{Calibration quantile.}  For each $(X_i,Y_i)\in\mathcal D_{\mathrm{test}}$,
           set $\widetilde S_i:=\widetilde S\bigl(\widetilde C(Y_i\mid X_i)\mid X_i\bigr)$ and define
      \[
        \widetilde Q_{1-\alpha} = 
        \operatorname{Quantile}_{\, (1-\alpha)\bigl(1+\frac{1}{|\mathcal D_{\mathrm{test}}|}\bigr)}
        \bigl\{\widetilde S_i : i\in\mathcal D_{\mathrm{test}}\bigr\}.
      \]
    \State \textbf{Scoring a new covariate.} For a new random predictor $X_{n+1}\in \mathbb{R}$ and any $y\in\mathcal{Y}$,
           compute $\widetilde C(y\mid X_{n+1})$ and
           $\widetilde S\bigl(\widetilde C(y\mid X_{n+1})\mid X_{n+1}\bigr)$.
    \State \textbf{Prediction set.}
      \[
        \widetilde C_\alpha(X_{n+1})=
        \bigl\{\,y\in \mathcal{Y} :
                \widetilde S\bigl(\widetilde C(y\mid X_{n+1})\mid X_{n+1}\bigr)\le\widetilde Q_{1-\alpha}
        \bigr\}.
      \]
  \end{algorithmic}
  \label{alg:confc}
\end{algorithm}

	\section{Mathematical models}

\noindent In this section, we present a framework for uncertainty quantification, based on the previous definition of the oracle prediction region (see Section \ref{sec:def}):

	\begin{equation}
	C^{\alpha}(x):= \mathcal{B}(m(x),r(x)).
	\end{equation}
	
	\noindent Recall that $r(x)$ is the smallest number $r$ such that $\mathbb{P}(Y\in \mathcal{B}(m(x),r)\mid X=x)\geq 1-\alpha$.
	
	\noindent The initial phase of our uncertainty quantification methodology entails the estimation of the Fréchet regression function $m:\mathcal{X}\to \mathcal{Y}$.

\noindent In the second step, we determine the radius that governs the number of points contained within the prediction regions. This selection is made to ensure that the set includes at least a $(1-\alpha)$ proportion of points from the dataset.

\noindent We emphasize that our method establishes, for different confidence levels $\alpha \in [0,1]$, different level sets for the prediction regions problem. This approach allows us to associate the prediction regions with the notion of conditioned quantiles for metric space responses, generalizing the marginal approaches from \cite{liu2022quantiles}.

	\subsection{Homoscedastic case}\label{sec:ho}

    Algorithm \ref{alg:metd1} outlines the core steps of our uncertainty quantification method for the homoscedastic case. The data are divided into two independent subsets: a training set \(\mathcal{D}_{\mathrm{train}}\), used to estimate the center function \(m\), and a calibration set \(\mathcal{D}_{\mathrm{cal}}\), used to estimate the radius of the prediction set.

\noindent In the homoscedastic case, the conditional distribution of \(d_{2}(Y,m(x))\) does not depend on the point \(x \in \mathcal{X}\). Therefore, after estimating \(m\) by \(\widetilde{m}\) using \(\mathcal{D}_{\mathrm{train}}\), it is natural to construct a single global radius from the empirical distribution of the calibration scores
\[
\left\{
d_{2}(Y_i,\widetilde{m}(X_i)) :
(X_i,Y_i)\in \mathcal{D}_{\mathrm{test}}
\right\}.
\]
To obtain a finite-sample conformal guarantee, the radius is chosen as the empirical quantile at the adjusted level
\[
(1-\alpha)\left(1+\frac{1}{|\mathcal{D}_{\mathrm{test}}|}\right).
\]
This yields a conformalized version of the algorithm satisfying the marginal non-asymptotic guarantee
\[
\mathbb{P}\left(
Y\in \widetilde{C}^{\alpha}(X;\mathcal{D}_{n})
\right)
\geq 1-\alpha.
\]

\noindent The idea of estimating different quantities of the model on different data splits is standard in the literature on conformal inference. This approach increases computational feasibility, provides non-asymptotic guarantees, and facilitates theoretical asymptotic analysis. However, it may also lead to some loss of statistical efficiency, since model quantities are estimated on different subsamples. These methods, known as split conformal methods, have been extensively studied; see, for example, \cite{vovk2018cross, solari2022multi}.

\begin{algorithm}[ht!]
\caption{Uncertainty quantification algorithm; homoscedastic case}
\label{alg:metd1}
\begin{enumerate}
    \item Estimate the function \(m(\cdot)\) by \(\widetilde{m}(\cdot)\) using the training sample \(\mathcal{D}_{\mathrm{train}}\).

    \item For each \((X_i,Y_i)\in \mathcal{D}_{\mathrm{test}}\), evaluate \(\widetilde{m}(X_i)\) and define the calibration score
    \[
    \widetilde{r}_i
    =
    d_{2}(Y_i,\widetilde{m}(X_i)).
    \]

    \item Let \(n_{2} = |\mathcal{D}_{\mathrm{test}}|\). Estimate the empirical distribution of the calibration scores by
    \[
    \widetilde{G}^{*}(t)
    =
    \frac{1}{n_{2}}
    \sum_{(X_i,Y_i)\in \mathcal{D}_{\mathrm{test}}}
    \mathbb{I}\{\widetilde{r}_i \leq t\}.
    \]
    Denote by \(\widetilde{q}_{\alpha}\) the empirical quantile at level
    \[
    min\left(1,(1-\alpha)\left(1+\frac{1}{n_{2}}\right)\right),
    \]
    that is,
    \[
    \widetilde{q}_{1-\alpha}
    =
    \inf
    \left\{
    t :
    \widetilde{G}^{*}(t)
    \geq
    (1-\alpha)\left(1+\frac{1}{n_{\mathrm{2}}}\right)
    \right\}.
    \]
   
    \item Return the prediction set
    \[
    \widetilde{C}^{\alpha}(x;\mathcal{D}_{n})
    =
    \mathcal{B}
    \left(
    \widetilde{m}(x),
    \widetilde{q}_{1-\alpha}
    \right).
    \]
\end{enumerate}
\end{algorithm}

\noindent We have the following marginal finite-sample guarantee.

\begin{proposition}
\label{th:finite}
Assume that the random elements \(\{(X_i,Y_i)\}_{i=1}^{n}\) and the test point \((X,Y)\) are i.i.d. Let \(\widetilde{m}\) be any estimator constructed using only \(\mathcal{D}_{\mathrm{train}}\). Then Algorithm \ref{alg:metd1} satisfies
\[
\mathbb{P}
\left(
Y \in \widetilde{C}^{\alpha}(X;\mathcal{D}_{n})
\right)
\geq
1-\alpha.
\]
\end{proposition}

\begin{remark}
\noindent Proposition \ref{th:finite} holds in both homoscedastic and heteroscedastic cases. However, when we apply Algorithm \ref{alg:metd1} to a heteroscedastic model, the resulting conditional coverage can substantially deviate from the oracle prediction region. Under such circumstances, the algorithm may not be consistent.

\end{remark}

	\begin{remark}
In discrete response spaces, ties among the calibration scores
\[
\widetilde r_j = d_2(Y_j,\widetilde m(X_j))
\]
may occur with positive probability. These ties do not invalidate the finite-sample lower coverage guarantee of Algorithm~\ref{alg:metd1}, but they may make the resulting prediction region more conservative. To obtain the usual near-exact coverage bound, one may use randomized tie-breaking, for example by augmenting each calibration score with an independent continuous auxiliary random variable. The resulting randomized scores are distinct almost surely. This strategy is discussed, for example, by \citet[Definition~1]{kuchibhotla2020exchangeability} and by \citet{cauchois2021knowing} in the context of discrete structures. Under randomized tie-breaking,
\[
1-\alpha \le \mathbb{P}( Y\in\widetilde C^\alpha(X;\mathcal D_n)) \le 1-\alpha+\frac{1}{n_{2}+1}.
\]
\end{remark}

	\subsubsection{Statistical consistency}

	Next we introduce the technical assumptions that ensure the asymptotic consistency of Algorithm \ref{alg:metd1}.
	
	\begin{assumption}
		Suppose that the following hold:
		\begin{enumerate}\label{1Smet}
			
            \item The distribution of $(X,Y)$ is homoscedastic with respect to $m$.
            \item $n_1,n_2\to \infty.$
			\item $\widetilde{m}$ is a consistent estimator in the sense that $\lim_{n_{1}\to \infty}\mathbb{E}(d_{2}(\widetilde{m}(X),m(X))| \mathcal{D}_{train})\to 0$, in probability.
 \item  The population quantile \(q_{1-\alpha} = \inf\{t \in \mathbb{R} : G(t) = \mathbb{P}(d_{2}(Y, m(X)) \leq t) = 1-\alpha\}\) is unique and is a continuity point of the function \(G(\cdot)\).
 \end{enumerate}
	\end{assumption}


	\begin{theorem}\label{th:conshom}
		Under Assumption \ref{1Smet}, the estimated prediction region $\widetilde{C}^{\alpha}(x;\mathcal{D}_{n})$ obtained using Algorithm \ref{alg:metd1} satisfies

		\begin{equation}
			\lim_{n\to \infty} \mathbb{E}(\varepsilon(\widetilde{C}^{\alpha},X)|\mathcal{D}_{n})\to 0  \text{ in probability. }
		\end{equation}
		
	\end{theorem}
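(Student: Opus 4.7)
The plan is to reduce the theorem to the convergence of the empirical quantile $\widetilde{q}_{1-\alpha}$ plus a symmetric--difference estimate that leverages homoscedasticity. First, under Definition~\ref{def:homoc} the conditional law $\mathbb{P}(Y\in\mathcal{B}(m(x),r)\mid X=x)=\phi(r)$ does not depend on $x$, so the marginal $G(t):=\mathbb{P}(d_2(Y,m(X))\leq t)$ equals $\phi(t)$ and the oracle radius $r(x)$ collapses to the constant $q_{1-\alpha}$ appearing in Assumption~\ref{1Smet}. Hence $C^{\alpha}(X)=\mathcal{B}(m(X),q_{1-\alpha})$, and the problem reduces to comparing this ball with $\widetilde{C}^{\alpha}(X)=\mathcal{B}(\widetilde{m}(X),\widetilde{q}_{1-\alpha})$.

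Next, I would show $\widetilde{q}_{1-\alpha}\to q_{1-\alpha}$ in probability. Introduce the oracle empirical distribution $G^{*}(t)=\frac{1}{n_2}\sum_{i\in[S_2]}\mathbb{I}\{d_2(Y_i,m(X_i))\leq t\}$. Conditioning on $\mathcal{D}_{train}$, the points $(X_i,Y_i)_{i\in[S_2]}$ remain i.i.d.\ with the same law as $(X,Y)$, so the law of large numbers yields $G^{*}(t)\to G(t)$ in probability at every continuity point of $G$. The reverse triangle inequality gives $|d_2(Y_i,\widetilde{m}(X_i))-d_2(Y_i,m(X_i))|\leq\delta_i:=d_2(\widetilde{m}(X_i),m(X_i))$, so for every $\epsilon>0$,
\[
G^{*}(t-\epsilon)-\frac{1}{n_2}\sum_{i\in[S_2]}\mathbb{I}\{\delta_i>\epsilon\}\leq \widetilde{G}^{*}(t)\leq G^{*}(t+\epsilon)+\frac{1}{n_2}\sum_{i\in[S_2]}\mathbb{I}\{\delta_i>\epsilon\}.
\]
Markov applied to the consistency hypothesis gives $\mathbb{P}(\delta_i>\epsilon\mid\mathcal{D}_{train})\leq\epsilon^{-1}\mathbb{E}(\delta_i\mid\mathcal{D}_{train})\to 0$ in probability, so each remainder vanishes. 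Letting $\epsilon\downarrow 0$ and using continuity of $G$ at $q_{1-\alpha}$ gives $\widetilde{G}^{*}(t)\to G(t)$ there, and the uniqueness of $q_{1-\alpha}$ together with the standard quantile convergence argument yields $\widetilde{q}_{1-\alpha}\to q_{1-\alpha}$ in probability.

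The final step converts these two ingredients into a bound on $\mathbb{E}(\varepsilon(\widetilde{C}^{\alpha},X)\mid\mathcal{D}_n)$. Writing $\Delta(X)=d_2(\widetilde{m}(X),m(X))$ and $\eta=|\widetilde{q}_{1-\alpha}-q_{1-\alpha}|$, the triangle inequality shows that $\mathcal{B}(\widetilde{m}(X),\widetilde{q}_{1-\alpha})\triangle \mathcal{B}(m(X),q_{1-\alpha})$ is contained in $\{y:q_{1-\alpha}-\Delta(X)-\eta\leq d_2(y,m(X))\leq q_{1-\alpha}+\Delta(X)+\eta\}$, so by homoscedasticity
\[
\varepsilon(\widetilde{C}^{\alpha},X)\leq \phi\bigl(q_{1-\alpha}+\Delta(X)+\eta\bigr)-\phi\bigl((q_{1-\alpha}-\Delta(X)-\eta)_+\bigr).
\]
Taking conditional expectations over $X$ given $\mathcal{D}_n$, the right--hand side is bounded by $1$ and, by continuity of $\phi=G$ at $q_{1-\alpha}$ combined with $\Delta(X)\to 0$ in probability (Markov applied to the consistency hypothesis) and $\eta\to 0$ in probability from the previous step, tends to zero in probability by dominated convergence.

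The main technical obstacle is the passage from $\widetilde{G}^{*}$ to $G^{*}$, because $\widetilde{m}$ is random and couples all the test residuals through $\mathcal{D}_{train}$. Handling this cleanly requires conditioning on $\mathcal{D}_{train}$ and exploiting that $(X_i,Y_i)_{i\in[S_2]}$ are i.i.d.\ under this conditioning; one must also translate the conditional--mean consistency of $\widetilde{m}$ into unconditional convergence of the pointwise perturbations $\delta_i$ and of the integrand in the symmetric--difference bound, which is where Markov's inequality and dominated convergence each enter twice.
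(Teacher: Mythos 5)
Your argument is essentially the approach the paper takes: reduce to a constant oracle radius $q_{1-\alpha}$ via homoscedasticity, establish $\widetilde{q}_{1-\alpha}\to q_{1-\alpha}$ in probability by sandwiching $\widetilde{G}^{*}$ between shifted versions of the oracle empirical cdf $G^{*}$ using the reverse triangle inequality on $d_2$, and then bound the symmetric difference of the two balls by the $\phi$-measure of the annulus of width $2(\Delta(X)+\eta)$ about $q_{1-\alpha}$, finishing with continuity of $\phi$ at $q_{1-\alpha}$, Markov's inequality on the consistency hypothesis, and dominated convergence. The decomposition into a regression--error term and a quantile--error term matches exactly the structure the paper makes explicit in its Lipschitz--quantified Proposition~\ref{the:minimax1}, and you correctly identify the two places where conditioning on $\mathcal{D}_{train}$ is needed to decouple $\widetilde{m}$ from the test residuals.
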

	
	\begin{proof}
		See Appendix. 
	\end{proof}

\begin{assumption}\label{1S2met}
Suppose that for all \(t \in \mathbb{R}\), the distribution function of distances \(G(t, x) = \mathbb{P}(d_2(Y, m(x)) \leq t \mid X = x)\) is uniformly Lipschitz  with constant \(L\) in the sense that for all \(t, t' \in \mathbb{R}\) and for all \(x \in \mathcal{X}\), we have \(|G(t, x) - G(t', x)| \leq L |t - t'|\).
\end{assumption}

\begin{example}
\noindent	
Suppose that $\mathcal{X}= \mathbb{R}^{p}$, $\mathcal{Y}= \mathbb{R}^{m},$ $d_{1}(\cdot, \cdot)=d_{2}(\cdot,\cdot)=\norm{\cdot-\cdot}_{2}$ ($L^{2}$ norm), and consider the regression model
		
		\begin{equation}\label{eqn:gen}
		Y= m(X)+\epsilon, 
		\end{equation}
	\noindent	  where $\epsilon$ is a random vector taking values in $\mathbb{R}^{m}$, and independent from $X$. Let $r=\|\epsilon\|_{2}$, and suppose that $r$ admits a density $f_r$ satisfying
\[
\sup_{x\geq 0} f_r(x)\leq L<\infty.
\]
Then the distribution function $G$ is Lipschitz continuous.
  \end{example}
    
 \begin{remark}
Suppose that the assumptions of Example $3$ are satisfied. Consider any estimator \(\widetilde{m}\) of \(m\), and define the distribution of the distances:
\[
G^{*}(t, x) = \mathbb{P}(d_{2}(Y, \widetilde{m}(X)) \leq t \mid X = x).
\] 
We can express this probability in terms of the perturbation introduced by the estimator:
\[
G^{*}(t, x) = \mathbb{P}(\| \epsilon + (m(X) - \widetilde{m}(X)) \| \leq t \mid X = x).
\] 
Then for all $x$, the function \(G^{*}(t, x)\) is Lipschitz continuous.
\end{remark}

We now provide a non-asymptotic bound  on the expected coverage error of both the estimated prediction region and the oracle prediction region. Intuitively, this error can be controlled by the estimation errors of the regression function $m$ and of the calibration quantile $q_{1-\alpha}$.

	\begin{proposition}\label{the:minimax-1}
		Under Assumptions \ref{1Smet} and \ref{1S2met},  and assuming that $G^{*}(t,x)=\mathbb{P}(d_{2}(Y,\widetilde{m}(X))\leq t\mid X=x),$  is uniformly Lipschitz, we  have:
		
		\begin{equation*}
		\mathbb{E}(\varepsilon(\widetilde{C}^{\alpha},X)|\mathcal{D}_{n})\leq C(\mathbb{E}(d_{2}(\widetilde{m}(X),m(X))|\mathcal{D}_{train}) + (|\widetilde{q}_{1-\alpha}-{q}_{1-\alpha}|)),
		\end{equation*}
		
		\noindent where $C$ is a positive constant depending  on the Lipschitz constants of the functions $G$ and $G^{*}$.
  \end{proposition}
		
	\begin{proof}
		See Appendix. 
	\end{proof}

\begin{example}
\noindent
Suppose that $\mathcal{X} = \mathbb{R}^p$, $\mathcal{Y} = \mathbb{R}^m$, and $d_1(\cdot, \cdot) = d_2(\cdot, \cdot) = \|\cdot - \cdot\|_{2}$ (the $L^{2}$ norm). Consider the regression model
\begin{equation}\label{reg:def}
Y = m(X) + \epsilon,
\end{equation}
\noindent where $\epsilon$ is a random vector taking values in $\mathbb{R}^m$ and is independent of $X$. Let $f$ denote the density function of $\epsilon$. For this model, Proposition \ref{the:minimax-1} implies

	\begin{equation*}
		\mathbb{E}(\varepsilon(\widetilde{C}^{\alpha},X)|\mathcal{D}_{n})\leq C(\mathbb{E}(d_{2}(\widetilde{m}(X),m(X))|\mathcal{D}_{train}) + 4(|\widetilde{q}_{1-\alpha}-{q}_{1-\alpha}|)).
		\end{equation*}

\end{example}

\begin{remark}
In the absence of covariates and assuming the Fréchet mean \(m\) is known, the estimation rate depends solely on the convergence speed of the quantile estimator for pseudo-residuals 
\[
r = d\bigl(Y, m(X)\bigr).
\]
In this homoscedastic setting, our bounds extend the optimality results of \cite{gyofi2020nearest} for the univariate empirical distribution, yielding a rate of
$O\!\bigl(\sqrt{\ln n / n}\bigr).$ When \(m\) is unknown but can be estimated, at the parametric rate \(O(n^{-1/2})\), our algorithm can achieve fast convergence rates. For an explicit model-specific rate calculation, one must account for the error in estimating \(m\) and propagate it into the quantile estimation for \(q_{1-\alpha}\). To do this, one can invoke measurement-error quantile-estimation results from \cite{hansmann2017estimation}. In conclusion, the estimation of the prediction radius does not suffer from the dimensionality of the predictor because, under metric homoscedasticity, it reduces to estimating a single unconditional scalar quantile.
\end{remark}

  
  \begin{remark}
\noindent More specific non-asymptotic results can be derived for homoscedastic cases and univariate responses, as demonstrated in Corollary 1 of \cite{foygel2021limits}. In their work, the authors impose more stringent assumptions to bound the Lebesgue measure, including requirements such as the existence of a density function for the random error $\epsilon$ and symmetrical conditions, alongside stability criteria for the underlying mean regression functions $m$. In our setting, we construct the level set using balls without imposing any smoothness assumptions. We only assume the integrated asymptotic consistency of the regression function $m$.

	\end{remark}


	\subsection{Heteroscedastic case}\label{sec:hetero}


\begin{algorithm}[htbp]
\footnotesize
\caption{Uncertainty quantification algorithm for the heteroscedastic case with data-driven selection of \(k\)}
\label{algor:hete}

\begin{enumerate}
\setlength{\itemsep}{0.25em}

\item Split the data into \(\mathcal D_{\mathrm{train}}\) and
\(\mathcal D_{\mathrm{test}}\), with
\(n_{\mathrm{2}}=|\mathcal D_{\mathrm{test}}|\). Estimate
\(m(\cdot)\) by \(\widetilde m(\cdot)\) using
\(\mathcal D_{\mathrm{train}}\). For each
\((X_i,Y_i)\in\mathcal D_{\mathrm{test}}\), compute
\[
\widetilde r_i
=
d_2(Y_i,\widetilde m(X_i)).
\]

\item Let \(\mathcal K_n\) be a nonempty finite grid of candidate
neighborhood sizes. For \(x\in\mathcal X\) and \(k\in\mathcal K_n\),
let
\[
 N_k(x)
\equiv
 N_k(x;\mathcal D_{\mathrm{test}})
\]
denote the indexes of the $k$NNs of \(x\) among the
predictor values in \(\mathcal D_{\mathrm{test}}\). Define
\[
\widetilde q_{1-\alpha}(x;k)
=
\inf\left\{
t\geq 0:
\frac{1}{k}
\sum_{j\in\mathcal N_k(x)}
\mathbb I\left\{\widetilde r_j\leq t\right\}
\geq 1-\alpha
\right\}.
\]

\item For each \(k\in\mathcal K_n\), define
\[
\widetilde C^{\alpha,k}(x)
=
\mathcal B\left\{
\widetilde m(x),
\widetilde q_{1-\alpha}(x;k)
\right\}.
\]

\item Define the global empirical coverage and its deviation from the
nominal level by
\[
\widehat{\mathrm{Cov}}_{\mathrm{glob}}(k)
=
\frac{1}{n_{\mathrm{2}}}
\sum_{(X_i,Y_i)\in\mathcal D_{\mathrm{test}}}
\mathbb I(
Y_i\in\widetilde C^{\alpha,k}(X_i)),
\]
and
\[
\Delta_{\mathrm{glob}}(k)
=
\left|
\widehat{\mathrm{Cov}}_{\mathrm{glob}}(k)
-
(1-\alpha)
\right|.
\]

\item Let
\[
\mathcal X_{\mathrm{test}}
=
\left\{
X_i:(X_i,Y_i)\in\mathcal D_{\mathrm{test}}
\right\}.
\]
For each \(x\in\mathcal X_{\mathrm{test}}\), define the local empirical
coverage deviation
\[
\Delta_{\mathrm{loc}}(x;k)
=
\left|
\frac{1}{k}
\sum_{i\in\mathcal N_k(x)}
\mathbb I(Y_i\in\widetilde C^{\alpha,k}(X_i))
-
(1-\alpha)
\right|,
\]
and let
\[
\Delta_{\mathrm{loc}}^{\max}(k)
=
\max_{x\in\mathcal X_{\mathrm{test}}}
\Delta_{\mathrm{loc}}(x;k).
\]

\item Given \(\varepsilon>0\), define the set of admissible
neighborhood sizes
\[
\mathcal K_{\varepsilon}
=
\left\{
k\in\mathcal K_n:
\Delta_{\mathrm{loc}}^{\max}(k)\leq\varepsilon,
\quad
\Delta_{\mathrm{glob}}(k)\leq\varepsilon
\right\}.
\]
If \(\mathcal K_{\varepsilon}\neq\varnothing\), select
\[
k_{\varepsilon}^{\ast}
=
\min\mathcal K_{\varepsilon}.
\]
Otherwise, select
\[
k_{\varepsilon}^{\ast}
\in
\arg\min_{k\in\mathcal K_n}
\max\left\{
\Delta_{\mathrm{loc}}^{\max}(k),
\Delta_{\mathrm{glob}}(k)
\right\},
\]
choosing the smallest minimizer in the event of a tie.

\item For any new point \(x\in\mathcal X\), return
\[
\widetilde C^{\alpha,\varepsilon}(x)
=
\mathcal B(
\widetilde m(x),
\widetilde q_{1-\alpha}
\left(x;k_{\varepsilon}^{\ast}\right)
).
\]

\end{enumerate}
\end{algorithm}

\noindent	The homoscedastic assumption can be too restrictive in metric spaces that lack a linear structure, making a local approach more appropriate. In this paper, we adopt the $k$NN regression algorithm \cite{fix1951discriminatory, biau2015lectures} for this purpose, as it is both computationally efficient and easy to implement.
	
\noindent	In heteroscedastic methods, given a point \(x\in \mathcal{X}\) with distance metric \(\rho_{\mathcal{X}}\), we repeat the same steps as in the homoscedastic algorithm, but the radius estimation only considers data points in a neighborhood of \(x\).  
Let \(S_2\) be an index set and \(\mathcal{D}_{\mathrm{test}}\) a set of predictor values (with \(n_2 = |\mathcal{D}_{\mathrm{test}}|\)).  
For each \(x\in\mathcal{X}\), define  
\[
N_k(x) = \left\{ i\in S_2: \rho_{\mathcal{X}}(X_i,x)\leq \rho_{\mathcal{X}_{n_2,k}}(x) \right\},
\]  
where \(\rho_{\mathcal{X}, n_2, k}(x)\) denotes the distance from \(x\) to its \(k\)-th-nearest-neighbor among the predictor values in \(\mathcal{D}_{\mathrm{test}}\), measured using the metric \(\rho_{\mathcal{X}}\).

\noindent Algorithm~\ref{alg:hete} provides a detailed description of the proposed procedure, including a data-driven criterion for selecting the number of neighbors \(k\). The method does not require smoothness assumptions or repeated smoothing steps. Instead, it balances marginal and local empirical calibration while favoring prediction regions that adapt to the local geometry of the predictor space. Its main computational operation is a nearest-neighbor search, for which efficient approximate algorithms are available in large-scale settings.

\noindent As in split conformal prediction \cite{Sample-Conditional_Coverage}, the center is estimated on a training sample, while the calibration sample is used to compute residual scores and estimate the prediction radius through empirical order statistics. Standard split conformal prediction uses a single global quantile to target marginal coverage. In contrast, our method computes local quantiles within $k$NN neighborhoods, allowing the radius to adapt to heteroscedasticity. This strategy is motivated by \cite{Sample-Conditional_Coverage}, who show that covariate-adaptive quantile thresholds estimated on held-out data can achieve strong approximate conditional-coverage properties. Their results support the general calibration principle, although they do not directly imply minimax optimality for our specific $k$NN construction.

\noindent The proposed tuning strategy is motivated by relaxed conditional calibration. Following the perspective of \citet{Conformal_prediction_with}, conditional validity may be relaxed by requiring calibration over a restricted function class \(\mathcal{F}\), rather than over all measurable subsets of the covariate space. Different choices of \(\mathcal{F}\), ranging from constant functions to finite-dimensional classes of subgroup indicators, provide different compromises between marginal and conditional validity. In our setting, fixed linear or subgroup-based classes are replaced by a data-adaptive family of metric neighborhoods. For a fixed \(k\), define the empirical coverage residual  
\[
E_i^{(k)} = \mathbb I\bigl\{ Y_i\in\widetilde C^{\alpha,k}(X_i) \bigr\} - (1-\alpha),
\]  
\noindent and the \(k\)NN weight  
\[
w_{x,k}(X_i) = \frac{1}{k}\,\mathbb I\bigl\{ i\in\mathcal N_k(x) \bigr\}.
\]  
\noindent Let \(\mathcal X_{\mathrm{test}} = \{ X_i : (X_i,Y_i)\in\mathcal D_{\mathrm{test}} \}\).  
The worst local empirical coverage deviation is then  
\[
\Delta_{\mathrm{loc}}^{\max}(k) = \max_{x\in\mathcal X_{\mathrm{test}}}
\Bigl| \sum_{i=1}^{n_{\mathrm{test}}} w_{x,k}(X_i) \, E_i^{(k)} \Bigr|,
\]  
\noindent and the global deviation is  
\[
\Delta_{\mathrm{glob}}(k) = \Bigl| \frac{1}{n_{\mathrm{test}}}
\sum_{i=1}^{n_{\mathrm{test}}} \mathbb I\bigl\{ Y_i\in\widetilde C^{\alpha,k}(X_i) \bigr\}
- (1-\alpha) \Bigr|.
\]  
\noindent The set of admissible neighborhood sizes is  
\[
\mathcal K_{\varepsilon} = \bigl\{ k\in\mathcal K_{\mathrm{grid}} :
\Delta_{\mathrm{loc}}^{\max}(k)\leq\varepsilon,\;
\Delta_{\mathrm{glob}}(k)\leq\varepsilon \bigr\},
\]  
\noindent and we select \(k_{\varepsilon}^{\ast} = \min \mathcal K_{\varepsilon}\). Thus, the method selects the smallest neighborhood size satisfying the prescribed global and local empirical calibration constraints. The parameter $k$ controls the trade-off between local adaptation and statistical stability: smaller values yield more localized estimates of the conditional prediction radius, whereas larger neighborhoods provide more stable empirical quantiles. This trade-off is particularly relevant because the prediction regions are restricted to isotropic metric balls, which may be overly conservative in heterogeneous regions of the predictor space. At the same time, excessively small neighborhoods may produce unstable quantile estimates and overfit the calibration sample. The global and local calibration constraints mitigate this instability, while selecting the smallest admissible $k$ favors the greatest degree of local adaptation compatible with acceptable empirical calibration. More generally, the proposed selection framework is flexible and can accommodate alternative objectives. For example, leave-one-out estimates could be used to select, among the admissible values of $k$, the value minimizing the average prediction radius or another measure of predictive efficiency.

\noindent The criterion can therefore be interpreted as imposing approximate conditional calibration over a data-adaptive class of \(k\)NN metric neighborhoods. It does not require exact conditional coverage at every individual covariate value, which is generally impossible without strong distributional assumptions. Instead, it targets calibration over neighborhoods whose resolution is controlled by \(k\): smaller values correspond to more local notions of coverage, whereas larger values produce more stable calibration that is closer to marginal conformal validity.

\noindent Compared with methods based on a fixed linear class \(\mathcal{F}\), the proposed construction does not require the user to specify features or subgroup indicators. It only requires a metric on the predictor space and can therefore be applied directly to non-Euclidean predictors. For more complex predictor spaces, the neighborhoods may also be constructed using anchor-based metric representations, in which observations are represented through their distances to a finite collection of representative points (see for example \cite{matabuena2026randomeffectsalgorithmrandomobjects}). Provided that the selected neighborhood size remains within the asymptotic regime required by the theory, the resulting procedure preserves consistency. In the numerical experiments, we set \(\varepsilon=0.05\).

\begin{remark}    
Although Algorithm \ref{alg:hete} can be consistent,  finite-sample validity for this adaptive heteroscedastic construction is not established here. To address this limitation and propose a heteroscedastic algorithm with non-asymptotic performance guarantees, we suggest extending the conformal quantile methodology introduced by \cite{romano2019conformalized} to our context. Specifically, we aim to achieve $\mathbb{P}(Y \in \widetilde{C}^{\alpha}(X;\mathcal{D}_{n})) \geq 1 - \alpha$. The core steps are outlined below:

\begin{enumerate}
    \item \textbf{Estimation of $m(\cdot)$:} Utilize the dataset $\mathcal{D}_{train}$ to estimate the regression function $m(\cdot)$.
    
    \item \textbf{Estimation of $r(\cdot)$:} Employ the random sample $\mathcal{D}_{test}$ and the pairs $(X_{i}, d_{2}(Y_{i}, \widetilde{m}(X_{i})))$ for each $i \in [S_2]$ to estimate the radius function $r(\cdot)$.
    
    \item \textbf{Calibration of $\widetilde{r}(x)$:} In an additional data set $\mathcal{D}_{test2}$, we calibrate the radius $\widetilde{r}(x)$ to obtain non-asymptotic guarantees using the scores $S_{i} = d_{2}(Y_{i}, \widetilde{m}(X_{i})) - \widetilde{r}(X_{i})$. The empirical quantile at level $1-\alpha$, denoted by $\widetilde{w}_{\alpha}$, is then used to define the final radius function as $\widehat{r}(x) = \widetilde{r}(x) + \widetilde{w}_{\alpha}$.
\end{enumerate}
	\end{remark}

	\subsubsection{Statistical Theory}

\noindent	The following technical assumptions are introduced to guarantee that the proposed heteroscedastic $k$NN-algorithm is asymptotically consistent.

\begin{assumption}\label{alg:hete}
    \begin{enumerate}
        \item $n_1,n_2 \to \infty$.
        \item $\widetilde{m}$ is a consistent estimator in the sense that $\mathbb{E}(d_{2}(\widetilde{m}(X), m(X))|\mathcal{D}_{\text{train}}) \to 0$  in probability as $n_1 \to \infty$.
        \item As $n_{2}\to \infty$, $k \to \infty$, and $\frac{k}{n_{2}} \to 0$.
       \item Except in a set of probability zero, for all \(x \in \mathcal{X}\), the pointwise population quantile \(q_{1-\alpha}(x) = \inf\{t \in \mathbb{R} : G(t , x) = \mathbb{P}(d_{2}(Y, m(x)) \leq t \mid X=x) = 1-\alpha\}\) exists uniquely and  is a continuity point of the function \(G(\cdot, x)\).
\item Except in a set of probability zero, for all \(x \in \mathcal{X}\), \(\mathbb{E}(d_{2}(Y, m(x))\mid X=x) < \infty\).
    \end{enumerate}
\end{assumption}



 The following result applies to a deterministic sequence
$k=k_n$ satisfying
\[
k_n\to\infty,
\qquad
k_n/n_2\to0.
\]

	\begin{theorem}\label{theorem:hetero}
		Under Assumption \ref{alg:hete}, the uncertainty quantification estimator \(\widetilde{C}^{\alpha}(\cdot)\) obtained using Algorithm \ref{algor:hete} for a deterministic $k$ satisfies:
		\begin{equation*}
		\lim_{n\to \infty} \mathbb{E}(\varepsilon(\widetilde{C}^{\alpha},X)|\mathcal{D}_{n}) \to 0. 
		\end{equation*}
	\end{theorem}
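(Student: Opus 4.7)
The plan is to establish pointwise consistency of $\widetilde{C}^{\alpha}(x)$ for $P_X$-almost every $x$ and then upgrade this to the integrated statement via bounded convergence together with a subsequence argument.

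First, I would obtain a geometric bound on the pointwise error. For fixed $x$, set $\delta := d_{2}(\widetilde{m}(x), m(x))$, $\rho := \widetilde{q}_{1-\alpha}(x)$, $s := r(x)$, and $\eta := \delta + |\rho - s|$. A routine triangle-inequality argument gives
\[
\mathcal{B}(\widetilde{m}(x), \rho) \triangle \mathcal{B}(m(x), s) \subseteq \{y \in \mathcal{Y} : |d_{2}(y, m(x)) - s| \leq \eta\},
\]
whence
\[
\varepsilon(\widetilde{C}^{\alpha}, x) \leq G(s + \eta, x) - G((s - \eta)_{+}, x).
\]
By Assumption~\ref{alg:hete}(4), $G(\cdot, x)$ is continuous at $r(x)$ for $P_X$-almost every $x$, so the right-hand side vanishes as soon as $\eta \to 0$. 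The task therefore reduces to showing that both $\delta$ and $|\rho - s|$ tend to $0$ in probability for $P_X$-a.e. $x$.

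Second, I would establish the quantile consistency $\widetilde{q}_{1-\alpha}(x) \to r(x)$, which is the analytical heart of the argument. Conditioning on $\mathcal{D}_{\text{train}}$ freezes $\widetilde{m}$, and under Assumption~\ref{alg:hete}(3) ($k \to \infty$, $k/n_2 \to 0$), classical $k$NN-smoothing results \cite{biau2015lectures} give, for each fixed $t \geq 0$,
\[
\widetilde{G}^{*}_{k}(x, t) \longrightarrow \mathbb{P}(d_{2}(Y, \widetilde{m}(x)) \leq t \mid X = x, \mathcal{D}_{\text{train}})
\]
in probability over $\mathcal{D}_{\text{test}}$, for $P_X$-a.e. $x$; Assumption~\ref{alg:hete}(5) supplies the integrability needed here. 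The inequality $|d_{2}(Y, \widetilde{m}(x)) - d_{2}(Y, m(x))| \leq \delta$ yields the sandwich
\[
G(t - \delta, x) \leq \mathbb{P}(d_{2}(Y, \widetilde{m}(x)) \leq t \mid X = x, \mathcal{D}_{\text{train}}) \leq G(t + \delta, x),
\]
so combining with the integrated consistency of $\widetilde{m}$ (Assumption~\ref{alg:hete}(2)) via a subsequence argument gives convergence to $G(t, x)$ at every continuity point $t$. Hence $\widetilde{G}^{*}_{k}(x, t) \to G(t, x)$ in probability, and the classical principle that pointwise convergence of distribution functions plus continuity of the limit at the target level implies convergence of the corresponding quantiles yields $\widetilde{q}_{1-\alpha}(x) \to r(x)$ in probability for $P_X$-a.e. $x$.

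Third, I would combine these ingredients. Pointwise one has $\varepsilon(\widetilde{C}^{\alpha}, x) \to 0$ in probability for $P_X$-a.e. $x$, and the integrand lies in $[0, 1]$. A standard subsequence-plus-bounded-convergence argument---every subsequence of $n$ admits a further subsequence along which the pointwise convergence is almost sure, after which Fubini and bounded convergence give $\mathbb{E}(\varepsilon(\widetilde{C}^{\alpha}, X) \mid \mathcal{D}_n) \to 0$ along that sub-subsequence---upgrades the pointwise statement to $\mathbb{E}(\varepsilon(\widetilde{C}^{\alpha}, X) \mid \mathcal{D}_n) \to 0$ in probability. The main obstacle is cleanly managing the two independent layers of randomness: the estimator $\widetilde{m}$, built from $\mathcal{D}_{\text{train}}$, enters the $k$NN distribution-function estimator as the center of the residuals $d_{2}(Y_{(n_2, i)}, \widetilde{m}(x))$, so establishing $k$NN consistency with a random, data-dependent centering forces one to condition on $\mathcal{D}_{\text{train}}$ and then couple $\mathcal{D}_{\text{test}}$-side concentration with the only integrated (not pointwise) consistency available for $\widetilde{m}$, a coupling that must be threaded through the whole proof via repeated subsequence extractions.
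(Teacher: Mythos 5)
Your proposal follows what is essentially the natural route, and as far as one can tell from the paper's structure (compare Proposition~\ref{the:minimax1} and its heteroscedastic remark) it is the same one the appendix takes: (1) bound the symmetric-difference mass by the annulus $\{y : |d_2(y, m(x)) - r(x)| \leq \eta\}$ with $\eta = d_2(\widetilde{m}(x), m(x)) + |\widetilde{q}_{1-\alpha}(x) - r(x)|$; (2) drive $\eta \to 0$ by combining consistency of $\widetilde{m}$ with the sandwich $G(t-\delta,x) \le G^*(t,x) \le G(t+\delta,x)$ to pass from the $k$NN-estimated residual CDF to $G(\cdot,x)$; (3) integrate by bounded convergence. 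The triangle-inequality geometry and the sandwich are both correct, and continuity plus uniqueness of $q_{1-\alpha}(x)$ in Assumption~\ref{alg:hete}(4) is exactly what lets the annulus mass vanish as $\eta\to 0$.

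The one step that genuinely needs more than the citation to \cite{biau2015lectures} is the claim that ``classical $k$NN-smoothing results give $\widetilde{G}^*_k(x,t)\to G^*(t,x)$ for $P_X$-a.e.\ $x$.'' The indicator inside the $k$NN average is $\mathbb{I}\{d_2(Y_i, \widetilde{m}(x))\le t\}$, so the implicit regression target $x' \mapsto \mathbb{P}(d_2(Y,\widetilde{m}(x))\le t \mid X=x')$ is re-centered at $\widetilde{m}(x)$, a point that moves with the evaluation point $x$. Pointwise-a.e.\ $k$NN consistency rests on a Lebesgue-point argument whose exceptional $P_X$-null set depends on the target function; because the target here changes with $x$, these null sets are being invoked at a moving center and need not union to a null set without further work. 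This can be repaired by noting $|\mathbb{I}\{d_2(Y,y_0)\le t\} - \mathbb{I}\{d_2(Y,y_1)\le t\}| \le \mathbb{I}\{|d_2(Y,y_1)-t|\le d_2(y_0,y_1)\}$ and reducing to a countable dense set of centers in the separable space $\mathcal{Y}$, combined with the sandwich you already invoke --- but that reduction is a real step, not a citation, and should be made explicit. Separately, the nested subsequence extractions you worry about at the end can be avoided entirely: it is cleaner to first show that the unconditional quantity $\mathbb{P}(Y \in \widetilde{C}^{\alpha}(X) \triangle C^{\alpha}(X)) = \mathbb{E}\bigl[\mathbb{E}(\varepsilon(\widetilde{C}^{\alpha},X)\mid\mathcal{D}_n)\bigr]$ tends to $0$, and then obtain in-probability convergence of the nonnegative random variable $\mathbb{E}(\varepsilon(\widetilde{C}^{\alpha},X)\mid\mathcal{D}_n)$ directly from Markov's inequality, which removes the need to extract an almost-sure sub-subsequence for $P_X$-a.e.\ $x$ simultaneously.
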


\begin{proof}
The proof is given in the Appendix.
\end{proof}

\noindent Given a grid of candidate values \(\mathcal K_n\), chosen to satisfy the conditions required for consistency, we select an optimized value of \(k\) according to Algorithm~\ref{algor:hete}, with the aim of improving conditional calibration. The following results show that, for an appropriate choice of \(\mathcal K_n\), the proposed selection rule is universally consistent.

\begin{corollary}
Let \(\mathcal K_n\) be a nonempty finite candidate grid, possibly data-dependent candidate grid such that
\[
\min_{k\in\mathcal K_n} k \longrightarrow \infty,
\qquad
\frac{\max_{k\in\mathcal K_n} k}{n_{2}}
\longrightarrow 0
\]
in probability. Let \(\widetilde{k}_n\) be any measurable selection rule satisfying
\(\widetilde{k}_n\in\mathcal K_n\) almost surely. Then, under the assumptions of
Theorem~\ref{theorem:hetero},
\begin{equation*}
  \lim_{n\to \infty}  \mathbb{E}(
\varepsilon (
\widetilde C^{\alpha,\widetilde k_n},X)
\mid\mathcal D_n)
\longrightarrow 0
\end{equation*}

\end{corollary}

\noindent The following additional technical assumptions are used to derive a non-asymptotic
upper bound for the local coverage error of the proposed heteroscedastic
\(k\)NN uncertainty quantification procedure.

\begin{assumption}\label{ass:meas_quant}
Fix \(\alpha\in(0,1)\). Assume that:
\begin{enumerate}

    \item Suppose that \(m\), \(G\), and their corresponding estimators \(\widetilde m\) and \(\widetilde G\) are measurable functions.

    \item Except on a set of probability zero, for all \(x\in\mathcal X\),
    the conditional quantile
    \[
    q_{1-\alpha}(x)
    =
    \inf\{t\geq 0:G(t,x)\geq 1-\alpha\}
    \]
\noindent    is finite, uniquely defined, and is a continuity point of
    \(G(\cdot,x)\).

    \item There exists \(L<\infty\) such that, except on a set of probability
    zero, for all \(s,t\geq 0\) and almost every \(x\in\mathcal X\),
    \[
    |G(t,x)-G(s,x)|
    \leq
    L|t-s|.
    \]

\item The following integrability conditions hold:
\[
\mathbb E
\left[
d_2(\widetilde m(X),m(X))
\,\middle|\,
\mathcal D_{\mathrm{train}}
\right]
<\infty,
\]
and
\[
\mathbb E
\left[
\left|
\widetilde q_{1-\alpha}(X)-q_{1-\alpha}(X)
\right|
\, \mid
\mathcal D_n
\right]
<\infty.
\]
\end{enumerate}
\end{assumption}

\begin{proposition}\label{the:minimax1}
Suppose that Algorithm \ref{algor:hete} is well defined and that Assumption
\ref{ass:meas_quant} holds. Then the uncertainty quantification estimator
\(\widetilde C^{\alpha}(\cdot)\) satisfies
\[
\mathbb{E}
\left[
\varepsilon(\widetilde C^{\alpha},X)
\,\middle|\,
\mathcal D_n
\right]
\leq
C(\mathbb{E}
\left[
d_2(\widetilde m(X),m(X))
\,\middle|\,
\mathcal D_{\mathrm{train}}
\right]
+
4
\mathbb{E}(\left|
\widetilde q_{1-\alpha}(X)-q_{1-\alpha}(X)\mid
\mathcal D_n)
\right ),
\]
where \(C>0\) is a constant depending only on the Lipschitz constant \(L\).
\end{proposition}

\begin{proof}
The proof is given in the Appendix.
\end{proof}

\noindent Suppose that \(\mathcal X=\mathbb R^p\) and
\(\mathcal Y=\mathbb R^m\), with $d_1(\cdot,\cdot)=d_2(\cdot,\cdot)=\|\cdot-\cdot\|_{2}.$ Consider the heteroscedastic regression model
\[
Y=m(X)+\sigma(X)\odot \epsilon,
\]
\noindent where \(m:\mathbb R^p\to\mathbb R^m\), the symbol \(\odot\) denotes componentwise multiplication, and
\(\sigma:\mathbb R^p\to\mathbb R^m\), \(\epsilon\) is
independent of \(X\), and $\mathbb E(\epsilon)=0$,
$\operatorname{Cov}(\epsilon)=I_m.$ Assume that the components of \(\sigma\) are bounded away from zero, and the conditional
distribution of
\[
d_2(Y,m(X))
=
\|\sigma(X)\odot\epsilon\|_2
\]
\noindent has a distribution function \(G(\cdot,x)\) that is Lipschitz in its first
argument, uniformly over \(x\) (i.e., \(|G(t_1,x)-G(t_2,x)|\le L|t_1-t_2|\) for some \(L>0\) and $\forall x\in \mathcal{X})$. If Assumption \ref{ass:meas_quant} holds, we have the rate decomposition for
Proposition \ref{the:minimax1}.

\begin{remark}
Proposition \ref{the:minimax1} shows that the local coverage error of the
heteroscedastic procedure is controlled by two quantities. The first term is
the estimation error of the regression function \(m\). The second term is the
error incurred by estimating the local conditional quantile
\(q_{1-\alpha}(x)\). This second term is specific to the heteroscedastic
setting, where the prediction radius depends on \(x\). For the \(k\)NN radius estimator used in Algorithm \ref{algor:hete}, the
second term corresponds to the local quantile estimation error based on the nearest-neighbor residuals. In related work, \cite{gyofi2020nearest} study nearest-neighbor prediction intervals and obtain rates of order
\[
O\left(
\frac{\log n}{n^{1/(p+2)}}
\right),
\qquad p\geq 1,
\]
when the conditional regression function \(m\) is known. This rate depends on
the dimension \(p\) of the covariate space, in contrast with the homoscedastic
case, where the prediction radius is global rather than local.
\end{remark}

\subsection{Regression algorithms in metric spaces: Random Forest, Global Fréchet regression method, and \(k\)NN}\label{sec:global}
In this section, we introduce a general background on regression modeling in metric spaces needed to implement our predictive algorithms in practice. We first introduce two nonparametric regression alternatives: $k$NN and Random Forests. Finally, we introduce the global Fréchet regression algorithm as a special type of parametric  model for metric-space-valued outcomes. Although more recent approaches, including deep-learning methods for metric-space data \cite{Iao03072025}, have been proposed, we restrict our attention to these methods for the sake of brevity.



\noindent Before proceeding, recall that the target $m$ is the conditional Fréchet mean

\[
m(x)=\arg\min_{y\in\mathcal{Y}}\,
      \mathbb{E}\bigl[d_{1}^{2}(Y,y)\mid X=x\bigr],
\]

\noindent whose estimation must be made feasible in practice.

\medskip
\noindent\textbf{\(k\)NN estimator.}
Given an i.i.d. sample \(\mathcal{D}_{n}=\{(X_i,Y_i)\}_{i=1}^{n} \subset \mathcal{X}\times \mathcal{Y}\), the simple yet effective estimator calculates the \emph{Fréchet mean} locally

\[
\widetilde{m}_{k}(x)
    =\arg\min_{y\in\mathcal{Y}}
      \frac{1}{k}\sum_{i\in N_{k}(x)}d_{1}^{2}(Y_i,y),
\]

\noindent where \(N_k(x)\) denotes the indexes of the $k$NN of the point \(x\) in the training sample \(\mathcal{D}_{\text{train}}\), with respect to a distance \(d_{1}\) that measures the similarity of the elements in the predictor space. A generalization of this algorithm to non-Euclidean predictor spaces, based on the concept of \(k\)-prototypes, was proposed by \citet{Cohen2022}. The authors established the universal consistency of the estimator under mild regularity conditions--conditions that are also satisfied by our uncertainty quantification algorithms.

\medskip
\noindent\textbf{Random Forest regressogram estimator.}
Recently, a Random Forest for non--Euclidean predictors and responses was introduced by \citet{capitaine2024frechet}.  Let \(\pi_n\) denote a partitioning rule that can be estimated for the specific split tree rules introduced in \citep{capitaine2024frechet}.  The corresponding Fréchet regressogram estimator is

\[
\widetilde{m}(x)
  =\arg\min_{y\in\mathcal{Y}}
     \frac{1}{n}\sum_{i=1}^{n}
       d_{1}^{2}(Y_i,y)\,
       \mathbb{I}\{X_i\in\pi_n[x]\},
\]
\noindent where \(\pi_n[x]\) is the unique cell containing \(x\). Over this tree--algorithm, the authors define a random forest algorithm.

Under the regularity conditions specified by \citeauthor{capitaine2024frechet} for universal consistency, our prediction region algorithm is consistent. However, these theoretical guarantees require that the metric space have a finite diameter and that the associated regression functions $m$ satisfy smoothing (differentiability) assumptions that are not imposed by the \(k\)NN based approach of \cite{cohen2022learning}.
	
\hspace{0.2cm}
\noindent \textbf{Global Fréchet Algorithm} Now, we focus on a specific parametric and less flexible regression model designed for outcomes valued in the metric space, known as \emph{global Fréchet regression model} \citep{petersen2019frechet}.

The global Fréchet regression model provides a natural extension of the standard linear regression model from Euclidean spaces to metric spaces. 

\noindent 
In particular, \cite{petersen2019frechet}
propose to model the regression function in the form
\begin{equation*}
m(x)=\underset{y \in \mathcal{Y}}{\operatorname{argmin}} \mathbb{E}\left[\omega(x, X) d^{2}(Y,y)\right]~,
\end{equation*}
where the weight function $\omega$ is defined by
\begin{equation*}
\omega(x,z)=1+(z-\mu)^{\top} \Sigma^{-1}(x-\mu) 
\end{equation*}
with $\mu=\mathbb{E}(X)$ and $\Sigma=\operatorname{Cov}(X)$.
As \cite{petersen2019frechet} show, in the case of $\mathcal{Y}=\mathcal{H}$, where $\mathcal{H}$ is a separable Hilbert space with an inner product $\langle \cdot, \cdot \rangle,$ 
this model reduces to standard linear regression.





\noindent While in our applications we primarily focus on developing estimators in the global Fréchet regression model, it is important to emphasize that our uncertainty quantification algorithm can be applied to any regression technique, including random forests in metric spaces. It should be noted, however, that if the global Fréchet models do not align with the conditional Fréchet mean \( m \), the resulting estimators will be inconsistent. 


\noindent To estimate  the conditional mean function $m(x)$ under the
 global Fréchet model from a random sample $\mathcal{D}_n=\{(X_i,Y_i)\}_{i=1}^n$, we may solve the counterpart empirical problem

\begin{equation}\label{eqn:linear}
	\widetilde{m}(x)= \argminB_{y \in \mathcal{Y}} \frac{1}{n} \sum_{i=1}^{n} [\omega_{in}(x) d_{1}^{2}(y,Y_i)],
\end{equation}

\noindent where $\omega_{in}(x)= \left[ 1+(X_{i}-\overline{X})^{\top}\widetilde{\Sigma}^{-1}(x-\overline{X})\right],$ with $\overline{X}= \frac{1}{n} \sum_{i=1}^{n} X_i$, and $\widetilde{\Sigma}= \frac{1}{n-1} \sum_{i=1}^{n} (X_i-\overline{X})(X_i-\overline{X})^{\top}.$

\subsection{Sequential extension under temporal dependence}
\label{sec:time_series_summary}

The proposed center--radius framework can also be extended to metric-space-valued time series. Let \((Y_t)_{t\in\mathbb Z}\) be a strictly stationary and ergodic process taking values in a metric space \((\mathcal Y,d)\), and let \(\mathcal F_t=\sigma(Y_s:s\leq t)\). At each time \(t\), the goal is to construct a sequential prediction set for \(Y_{t+1}\) using only the information available in \(\mathcal F_t\). Ideally, one would use the conditional Fréchet mean
\[
m_t\in\arg\min_{y\in\mathcal Y}
\mathbb E[d^2(Y_{t+1},y)\mid\mathcal F_t],
\]
together with the conditional \((1-\alpha)\)-quantile \(q_t\) of the residual \(d^{2}(Y_{t+1},m_t)\). This yields the oracle prediction ball
\[
\mathcal C_t^\alpha
=
\{y\in\mathcal Y:d^{2}(y,m_t)\leq q_t\},
\]
which attains conditional coverage \(1-\alpha\) under the usual quantile regularity condition, for example when \(F_t(q_t)=1-\alpha\).

\noindent In practice, \(m_t\) is replaced by an \(\mathcal F_t\)-measurable estimator \(\widetilde m_t\), and the radius is estimated from past residuals using a nearest-neighbor expert-aggregation procedure \citep{cesa2006prediction,biau2011sequential}. For different memory lengths \(k\) and neighborhood sizes, the method compares the current lag block
\[
Z_t^{(k)}=(Y_{t-k+1},\ldots,Y_t)
\]
with similar historical blocks and estimates the corresponding conditional residual quantile. These local quantile experts are then combined through exponential weights based on the pinball loss, producing a data-adaptive radius \(\widetilde q_t\). The resulting prediction set is
\[
\widetilde{\mathcal C}_t^\alpha
=
\{y\in\mathcal Y:d^{2}(y,\widetilde m_t)\leq \widetilde q_t\}.
\]

\noindent This construction does not require independence, exchangeability, a parametric time-series model, explicit mixing-rate assumptions, or smoothing assumptions on the temporal dynamics. Its guarantees instead rely on stationarity, ergodicity, and regularity conditions on the metric space and the conditional residual distribution. Following the sequential prediction framework of \cite{biau2011sequential}, and under the assumptions stated in Supplemental Material~C, the proposed procedure satisfies the following asymptotic time-average calibration property:

\[
\frac{1}{n}\sum_{t=1}^{n}
\mathbb P\left(Y_{t+1}\in\widetilde{\mathcal C}_t^\alpha\right)
\longrightarrow 1-\alpha.
\]

\noindent Thus, the result should be interpreted as an asymptotic time-average calibration guarantee, rather than as finite-sample conformal validity. Stronger forms of conditional coverage may be established under additional assumptions on the stochastic process and the local quantile estimators, but such extensions are beyond the scope of this manuscript. The full algorithm, technical assumptions, proof, fixed-horizon multistep extension, and a practical real-data case study are provided in Supplemental Material~C.



\section{Simulation Study}

\noindent
We conducted a simulation study with independent data to evaluate the empirical performance of the proposed uncertainty quantification framework. In the main manuscript, we focus on scalar-on-scalar regression, which allows comparison with state-of-the-art methods. Additional simulations with metric-space-valued outcomes are reported in the Supplemental Material. The simulation study reported here has two main objectives. First, we compare the conditional coverage of the proposed procedures with that of state-of-the-art methods in scalar and metric-space settings, including the approach of \citep{zhou2024conformalinferencerandomobjects}. Second, we assess the efficiency gains achieved by the heteroscedastic procedure with data-adaptive selection of $k$, relative to the homoscedastic method, under both homoscedastic and heteroscedastic data-generating mechanisms. For the scalar-on-scalar experiments, we consider several signal-to-noise regimes and evaluate performance using global measures, including mean squared error, ($L^2$) error, and worst-slice coverage. For brevity, several complementary results and examples involving non-Euclidean data structures are relegated to the Supplemental Material.

\begin{table}[ht!]
\centering
\scriptsize
\setlength{\tabcolsep}{3pt}
\renewcommand{\arraystretch}{1.15}
\resizebox{\textwidth}{!}{%
\begin{tabular}{l|rr|rr||rr|rr||rr|rr}
\toprule
& \multicolumn{4}{c||}{\textbf{$1-\alpha=0.50$}} & \multicolumn{4}{c||}{\textbf{$1-\alpha=0.80$}} & \multicolumn{4}{c}{\textbf{$1-\alpha=0.95$}} \\
\cmidrule(lr){2-5}\cmidrule(lr){6-9}\cmidrule(lr){10-13}
& \multicolumn{2}{c|}{$n=1000$} & \multicolumn{2}{c||}{$n=3000$} & \multicolumn{2}{c|}{$n=1000$} & \multicolumn{2}{c||}{$n=3000$} & \multicolumn{2}{c|}{$n=1000$} & \multicolumn{2}{c}{$n=3000$} \\
\cmidrule(lr){2-3}\cmidrule(lr){4-5}\cmidrule(lr){6-7}\cmidrule(lr){8-9}\cmidrule(lr){10-11}\cmidrule(lr){12-13}
\textbf{Method} & \textit{ISCE} & \textit{WSC} & \textit{ISCE} & \textit{WSC} & \textit{ISCE} & \textit{WSC} & \textit{ISCE} & \textit{WSC} & \textit{ISCE} & \textit{WSC} & \textit{ISCE} & \textit{WSC} \\
\midrule
\multicolumn{13}{l}{\textsc{Setting 1}} \\[1pt]
\textbf{Heteroscedastic} & \textbf{0.083} & \textbf{0.370} & \textbf{0.054} & \textbf{0.389} & \textbf{0.068} & \textbf{0.678} & \textbf{0.044} & \textbf{0.703} & \textbf{0.040} & \textbf{0.864} & \textbf{0.024} & \textbf{0.891} \\
\textbf{Homoscedastic} & \textbf{0.299} & \textbf{0.176} & \textbf{0.297} & \textbf{0.174} & \textbf{0.212} & \textbf{0.437} & \textbf{0.213} & \textbf{0.441} & \textbf{0.094} & \textbf{0.717} & \textbf{0.093} & \textbf{0.723} \\
CQR & 0.060 & 0.372 & 0.041 & 0.399 & 0.051 & 0.696 & 0.031 & 0.719 & 0.031 & 0.883 & 0.019 & 0.902 \\
Profile & 0.136 & 0.215 & 0.134 & 0.222 & 0.138 & 0.529 & 0.138 & 0.537 & 0.094 & 0.723 & 0.095 & 0.729 \\
Dist & 0.172 & 0.273 & 0.163 & 0.320 & 0.144 & 0.577 & 0.137 & 0.586 & 0.059 & 0.887 & 0.050 & 0.951 \\
HPD & 0.232 & 0.195 & 0.231 & 0.224 & 0.210 & 0.425 & 0.197 & 0.466 & 0.222 & 0.515 & 0.186 & 0.582 \\
\midrule
\multicolumn{13}{l}{\textsc{Setting 2}} \\[1pt]
\textbf{Heteroscedastic} & \textbf{0.102} & \textbf{0.359} & \textbf{0.071} & \textbf{0.382} & \textbf{0.067} & \textbf{0.671} & \textbf{0.045} & \textbf{0.705} & \textbf{0.038} & \textbf{0.857} & \textbf{0.024} & \textbf{0.889} \\
\textbf{Homoscedastic} & \textbf{0.293} & \textbf{0.175} & \textbf{0.296} & \textbf{0.173} & \textbf{0.204} & \textbf{0.446} & \textbf{0.207} & \textbf{0.453} & \textbf{0.087} & \textbf{0.736} & \textbf{0.089} & \textbf{0.732} \\
CQR & 0.449 & 0.000 & 0.450 & 0.000 & 0.327 & 0.379 & 0.326 & 0.407 & 0.171 & 0.779 & 0.171 & 0.779 \\
Profile & 0.348 & 0.022 & 0.354 & 0.004 & 0.170 & 0.513 & 0.205 & 0.431 & 0.100 & 0.747 & 0.099 & 0.796 \\
Dist & 0.217 & 0.054 & 0.213 & 0.058 & 0.229 & 0.350 & 0.226 & 0.349 & 0.136 & 0.702 & 0.118 & 0.743 \\
HPD & 0.233 & 0.187 & 0.235 & 0.199 & 0.202 & 0.471 & 0.187 & 0.491 & 0.148 & 0.743 & 0.110 & 0.781 \\
\midrule
\multicolumn{13}{l}{\textsc{Setting 3}} \\[1pt]
\textbf{Heteroscedastic} & \textbf{0.095} & \textbf{0.371} & \textbf{0.074} & \textbf{0.384} & \textbf{0.062} & \textbf{0.680} & \textbf{0.045} & \textbf{0.707} & \textbf{0.029} & \textbf{0.876} & \textbf{0.019} & \textbf{0.896} \\
\textbf{Homoscedastic} & \textbf{0.270} & \textbf{0.187} & \textbf{0.273} & \textbf{0.183} & \textbf{0.178} & \textbf{0.499} & \textbf{0.183} & \textbf{0.486} & \textbf{0.067} & \textbf{0.797} & \textbf{0.068} & \textbf{0.795} \\
CQR & 0.271 & 0.038 & 0.270 & 0.024 & 0.239 & 0.442 & 0.241 & 0.445 & 0.125 & 0.812 & 0.121 & 0.834 \\
Profile & 0.221 & 0.240 & 0.219 & 0.251 & 0.118 & 0.590 & 0.130 & 0.573 & 0.036 & 0.868 & 0.042 & 0.853 \\
Dist & 0.227 & 0.245 & 0.226 & 0.257 & 0.137 & 0.579 & 0.135 & 0.589 & 0.049 & 0.860 & 0.043 & 0.859 \\
HPD & 0.256 & 0.203 & 0.255 & 0.207 & 0.185 & 0.517 & 0.173 & 0.521 & 0.101 & 0.728 & 0.100 & 0.724 \\
\midrule
\multicolumn{13}{l}{\textsc{Setting 4}} \\[1pt]
\textbf{Heteroscedastic} & \textbf{0.040} & \textbf{0.400} & \textbf{0.026} & \textbf{0.413} & \textbf{0.031} & \textbf{0.717} & \textbf{0.021} & \textbf{0.723} & \textbf{0.024} & \textbf{0.884} & \textbf{0.014} & \textbf{0.900} \\
\textbf{Homoscedastic} & \textbf{0.027} & \textbf{0.416} & \textbf{0.018} & \textbf{0.421} & \textbf{0.023} & \textbf{0.734} & \textbf{0.015} & \textbf{0.729} & \textbf{0.021} & \textbf{0.889} & \textbf{0.012} & \textbf{0.907} \\
CQR & 0.034 & 0.402 & 0.022 & 0.417 & 0.029 & 0.725 & 0.020 & 0.726 & 0.018 & 0.904 & 0.010 & 0.912 \\
Profile & 0.053 & 0.378 & 0.046 & 0.386 & 0.110 & 0.605 & 0.111 & 0.607 & 0.083 & 0.769 & 0.084 & 0.775 \\
Dist & 0.049 & 0.386 & 0.029 & 0.416 & 0.037 & 0.693 & 0.023 & 0.729 & 0.029 & 0.877 & 0.014 & 0.901 \\
HPD & 0.046 & 0.386 & 0.028 & 0.417 & 0.049 & 0.673 & 0.024 & 0.721 & 0.064 & 0.813 & 0.021 & 0.881 \\
\bottomrule
\end{tabular}
}
\caption{ISCE and local WSC for the conditional coverage for the four generative examples considered}
\label{tab:results1}
\end{table}

\subsection{Global Conditional Coverage Examination}

Most state-of-the-art conformal methods are designed primarily for scalar
responses. Moreover, reliable estimation of conditional coverage is generally
feasible only in low-dimensional settings and is most easily interpreted when
both the predictors and the responses are scalar. For these reasons, our initial
comparison of conditional coverage performance focuses on scalar-to-scalar
regression, although we also include the existing metric-space approach of
\citet{zhou2024conformalinferencerandomobjects}. In contrast to the simulation
settings considered in \citet{zhou2024conformalinferencerandomobjects}, where
the generative models are nearly deterministic, our settings involve
substantially higher noise levels. Here, we focus on the conditional coverage as a key parameter of interest:
$\mathbb{P}(Y \in \widetilde{C}^{\alpha}(X) \mid X)$,
 achieved by the optimized heteroscedastic $k$NN predictive
procedure; see Section~\ref{algor:hete} for details. We also test the
corresponding homoscedastic conformal version introduced in the Section \ref{sec:ho}.

\noindent We compare our procedure with established conformal methods for Euclidean
responses, including the HPD-split scores of \citet{izbicki2022cd}, denoted by
HPD and Dist in the result tables, respectively, and several variants of Conformalized
Quantile Regression (CQR) \citep{romano2019conformalized}. The HPD-split
implementation uses the \texttt{dist} and \texttt{hpd} functions from
\url{https://github.com/rizbicki/predictionBands}, while CQR is evaluated under
the default settings of the \texttt{cfsurvival} package. For completeness, we
also benchmark against the method of
\citet{zhou2024conformalinferencerandomobjects}, which is the only competitor
specifically designed for metric-space responses. For each simulation setting, level $\alpha$, and sample size $n$ equal to $1000$ or $3000$, we generate $100$ independent simulation replicates. In each replicate, the sample of size $n$ is randomly divided into a training sample and a test sample of equal size. The training sample is used to estimate the conditional mean function (or quantile regression), whereas the calibration sample is used to compute the residual scores, estimate the prediction radius, and select the corresponding tuning parameters. For the proposed heteroscedastic procedure, the neighborhood size used to estimate the local radius is selected according to the global and local empirical calibration criterion described in Algorithm \ref{algor:hete}.

\noindent In addition, for each replicate, we generate an independent evaluation sample of size 5000. This sample is used exclusively to evaluate the fitted prediction regions and is not used for model estimation, calibration, or tuning. For every observation in the evaluation sample, we record whether the corresponding response is contained in the estimated prediction region. We then fit a binomial generalized additive model \cite{wood2017generalized} to these coverage indicators as a smooth function of the covariate. More specifically, we estimate:

\[
p^{\alpha}(x)
=
\mathbb{P}\left\{
\mathbb{I}\left(Y \in \widetilde{C}^{\alpha}(X)\right) = 1
\mid X = x
\right\}.
\]

\noindent For each replicate \(b\), $b=1\dots,B$, we measure the \(L^2\) deviation from the nominal coverage \(1-\alpha\) as  
\[
\widetilde{\mathrm{ISCE}}^{\alpha}_{b}
= \int_{0}^{5} \bigl( \widetilde{p}^{\alpha}_{b}(x) - (1-\alpha) \bigr)^2 \,\mathrm{d}x,
\]  
and then average over replicates: \(\frac{1}{B}\sum_{b=1}^B \widetilde{\mathrm{ISCE}}^{\alpha}_{b}\).

\noindent To assess robustness, we use worst-slice coverage. For a prediction set \(\widetilde{C}^{\alpha}(X)\) and a class \(\mathcal{A}\) of covariate subsets,
\[
\mathrm{WSC}(\widetilde{C}^{\alpha};\mathcal{A}) = \inf_{\substack{A\in\mathcal{A}:\\ \mathbb{P}(X\in A)>0}} \mathbb{P}\bigl( Y \in \widetilde{C}^{\alpha}(X) \mid X \in A \bigr).
\]
The \(\delta\)-restricted version \(\mathrm{WSC}_{\delta}\) replaces \(\mathbb{P}(X\in A)>0\) with \(\mathbb{P}(X\in A)\ge \delta\) (we set \(\delta=0.05\) in simulations). We take \(\mathcal{A}\) as projection-induced slices: for each direction \(u\in\mathbb{S}^{p-1}\) and interval \(I\in\mathcal{I}\) (a prespecified family in \(\mathbb{R}\)),
\[
A_{u,I} = \{ x\in\mathbb{R}^{p} : \langle u,x\rangle \in I \},\qquad \mathcal{A} = \{ A_{u,I} : u\in\mathbb{S}^{p-1},\, I\in\mathcal{I} \}.
\]
Thus \(\mathrm{WSC}\) is the minimal conditional coverage over these slices, while \(\mathrm{WSC}_{\delta}\) ignores slices with mass below \(\delta\). To estimate local undercoverage, we use an independent evaluation sample and a fixed collection \(\mathcal{A}_{100}\) of 100 covariate intervals each of probability mass of at least \(0.05\):
\[
\widehat{\mathrm{WSC}} = \min_{A \in \mathcal{A}_{100}} \frac{ \sum_{i=1}^{n_{\mathrm{eval}}} \mathbf{1}\{X_i^{\mathrm{eval}} \in A\} \mathbf{1}\{Y_i^{\mathrm{eval}} \in \widetilde{C}^{\alpha}(X_i^{\mathrm{eval}})\} }{ \sum_{i=1}^{n_{\mathrm{eval}}} \mathbf{1}\{X_i^{\mathrm{eval}} \in A\} }.
\]
\noindent Because $X$ is scalar in all simulations presented in the main article, no projection directions or directional optimization are needed. These steps are only required for some examples in the Supplemental Material.

\paragraph{Simulation Settings}
\noindent

\noindent In our simulation study, we generate data from four scalar-to-scalar regression
models designed to cover linear and nonlinear mean structures, as well as
homoscedastic and heteroscedastic noise patterns. 
Our procedure uses $k$NN both for estimating the conditional Fréchet mean and for estimating the local prediction radius. For estimating the
conditional Fréchet mean, the tuning parameter \(k\) is selected by leave-one-out
cross-validation, following the methodology of \cite{azadkia2020optimal}. The
value of \(k\) used for radius estimation on the second split is then selected
according to the general calibration criterion introduced in the Algorithm \ref{algor:hete}.

\noindent

\paragraph{Setting 1 (Linear, heteroscedastic).}
\[
  X \sim \mathcal{U}(0,5),\quad
  \varepsilon \sim \mathcal{U}(-1,1),\quad
  Y = 3 + X + X\,\varepsilon.
\]

\paragraph{Setting 2 (Non‐linear, heteroscedastic; uniform noise).}
\[
  X \sim \mathcal{U}(0,5),\quad
  \varepsilon \sim \mathcal{U}(-1,1),\quad
  Y = 3 + e^{X} + X\,\varepsilon.
\]

\paragraph{Setting 3 (Non‐linear, heteroscedastic; Gaussian noise).}
\[
  X \sim \mathcal{U}(0,5),\quad
  \varepsilon \sim \mathcal{N}(0,4),\quad
  Y = 3 + e^{X} + X\,\varepsilon.
\]

\paragraph{Setting 4 (Additive, homoscedastic).}
\[
  X\overset{\mathrm{}}{\sim} \mathcal{U}(0,5),\quad \varepsilon \overset{\mathrm{}}{\sim} \mathcal{U}(-2.5,2.5) \quad
  Y = X + \varepsilon.
\]

\paragraph{Results}

Table~\ref{tab:results1} reports the mean \(L^2\) prediction error and worst-slice coverage for \(\alpha \in \{0.50, 0.20, 0.05\}\) and sample sizes \(n \in \{1000, 3000\}\). Our method constructs prediction sets in only a few seconds on a standard laptop, whereas the depth-profile approaches (implemented in \textsc{C++}) require about 50 minutes when \(n = 3000\). Average running times for all methods are given in the Supplemental Material. In the Supplemental Material, we also provide boxplots of the error distribution to illustrate estimation variability, not just average performance. Across the four simulation settings, no method uniformly dominates all competitors. The proposed heteroscedastic procedure performs particularly well in the nonlinear heteroscedastic settings, achieving the lowest ISCE and the highest or nearly highest WSC in Settings~2 and~3. CQR is the strongest competitor in Setting~1 and also performs well in the homoscedastic Setting~4, where the proposed global conformal procedure is highly competitive. These results indicate that local radius estimation is especially beneficial under nonlinear heteroscedasticity, whereas global or scalar-specific methods remain competitive when the data-generating mechanism is homoscedastic or comparatively simple.


\subsection{Graphical Analysis of Conditional Coverage in the Support of Random Variable $X$}

We next examine how conditional coverage varies over the support of $X$. The Supplemental figures show that the proposed heteroscedastic procedure provides the most stable coverage in the nonlinear heteroscedastic settings, particularly in Settings~2 and~3. In Setting~1, CQR remains highly competitive, while in the homoscedastic Setting~4 the global conformal procedure and CQR perform as well as, or slightly better than, the local method. These findings are consistent with the expected trade-off between local adaptation and statistical stability: local radius estimation is most beneficial when the conditional distribution changes substantially across the covariate space.

\subsection{Conclusions from the Simulation Results in the Supplemental Material}

\noindent In Supplemental Material~E, we provide additional simulations involving more complex outcomes, including probability distributions, Laplacian graphs, and multivariate Euclidean data. The main conclusions are as follows. First, in homoscedastic signal-to-noise regimes, the heteroscedastic algorithm does not satisfy finite-sample conformal guarantees because it is not conformally calibrated. Nevertheless, its empirical performance in terms of marginal coverage and worst-slice coverage is similar to that of the homoscedastic method, which is theoretically valid and close to optimal under homoscedasticity. Second, in heteroscedastic signal-to-noise regimes, the proposed adaptive method achieves good marginal calibration and worst-slice coverage close to the nominal level, whereas the homoscedastic method performs poorly in terms of worst-slice coverage. In small-sample regimes, however, worst-slice coverage for adaptive methods can be less stable. From a statistical learning perspective, this behavior is expected, since conditional calibration is impossible without either imposing additional generative assumptions or departing from the standard distribution-free conformal prediction framework.

\noindent Third, the performance of the $k$NN procedure can deteriorate as the dimension of the predictors increases, as is common for many algorithms. Its performance is also sensitive to the choice of $k$. Overall, the heteroscedastic $k$NN method achieves good marginal calibration despite not being conformal, and it provides good conditional calibration when the sample size is sufficiently large, without introducing explicit smoothing assumptions in the calibration step. These empirical findings are consistent with the scalar-on-scalar benchmark, where the proposed methods outperform several established approaches for scalar responses.

	\section{Analysis of real-world clinical data}

   In this section, we illustrate the applicability of the proposed framework for clinical outcomes taking values in different metric spaces. We first consider an unconditional shape-analysis problem involving handwritten digital biomarkers from individuals with and without Parkinson’s disease. We then examine a regression setting with distribution-valued continuous glucose monitoring outcomes, using global Fréchet regression to estimate the conditional center. Additional applications are presented in the Supplemental Material, including a time-series analysis of probability distributions derived from continuous glucose monitoring data collected in a clinical trial.

\subsection{Shape Analysis}

\subsubsection{Motivation}

Shape analysis provides a natural framework for studying random objects through their geometric characteristics and has important applications in biomedical research; see, for example, \citet{dryden2016statistical} and \citet{steyer2023elastic}. Here, we consider handwriting trajectories recorded from healthy controls and individuals with Parkinson's disease. Our objective is to construct Fr\'echet-centered reference regions that describe the typical variation in handwriting shapes among controls and to examine whether trajectories from individuals with Parkinson's disease tend to fall outside these regions. The analysis is intended as an exploratory illustration of the proposed framework for shape-valued observations, rather than as a validated classification procedure.

\subsubsection{Data Description}

We analyze the handwriting dataset of \citet{isenkul2014improved}, which contains digital trajectories from $n_{\mathrm{control}}=15$ healthy controls and $n_{\mathrm{PD}}=30$ individuals diagnosed with Parkinson's disease. Because no covariates are included, the analysis is based on the unconditional Fr\'echet mean of the control trajectories. We estimate the empirical Fr\'echet mean using the Euclidean metric $
d_1(\cdot,\cdot) = \|\cdot-\cdot\|_2.$ Prediction radii are then defined using the supremum metric $d_2(\cdot,\cdot) = \|\cdot-\cdot\|_\infty$,  which provides an interpretable measure of the maximum deviation between two handwriting trajectories. For a nominal coverage level $1-\alpha$, the corresponding empirical reference region is

\[
\widetilde C^\alpha = \mathcal{B}\!\left( \widetilde m, \widetilde q_{1-\alpha} \right),
\]
\noindent where $\widetilde m$ is the empirical Fr\'echet mean of the control group and $\widetilde q_{1-\alpha}$ is the empirical $(1-\alpha)$-quantile of the control residuals $d_2(Y_i,\widetilde m), i=1,\dots, 15$. The graphical envelopes shown below are used to visualize these metric-ball reference regions. Because the control sample is small, all control observations are used to estimate both the center and the radius. Consequently, these regions are descriptive empirical reference regions and do not carry the finite-sample split-conformal guarantee established for independent training and calibration samples.

\subsubsection{Results}

Figure~\ref{fig:firma} displays the handwriting trajectories together with empirical reference regions estimated from the control group. We consider miscoverage levels ($\alpha=0.5$) and ($\alpha=0.7$), corresponding to nominal coverage levels of (50\%) and (30\%), respectively. Trajectories from individuals with Parkinson's disease are highlighted in yellow. More than (75\%) of these trajectories lie outside the (50\%) reference region, suggesting that their geometric patterns frequently differ from those observed among the controls. However, because the control sample is small and the same observations are used to estimate both the Fréchet mean and the prediction radius, this finding should not be interpreted as validated diagnostic or classification performance. Rather, it provides an exploratory illustration of the potential value of shape-based digital biomarkers for characterizing Parkinson's disease patterns.

\begin{figure}[ht!]
    \centering
    \includegraphics[width=0.8\textwidth]{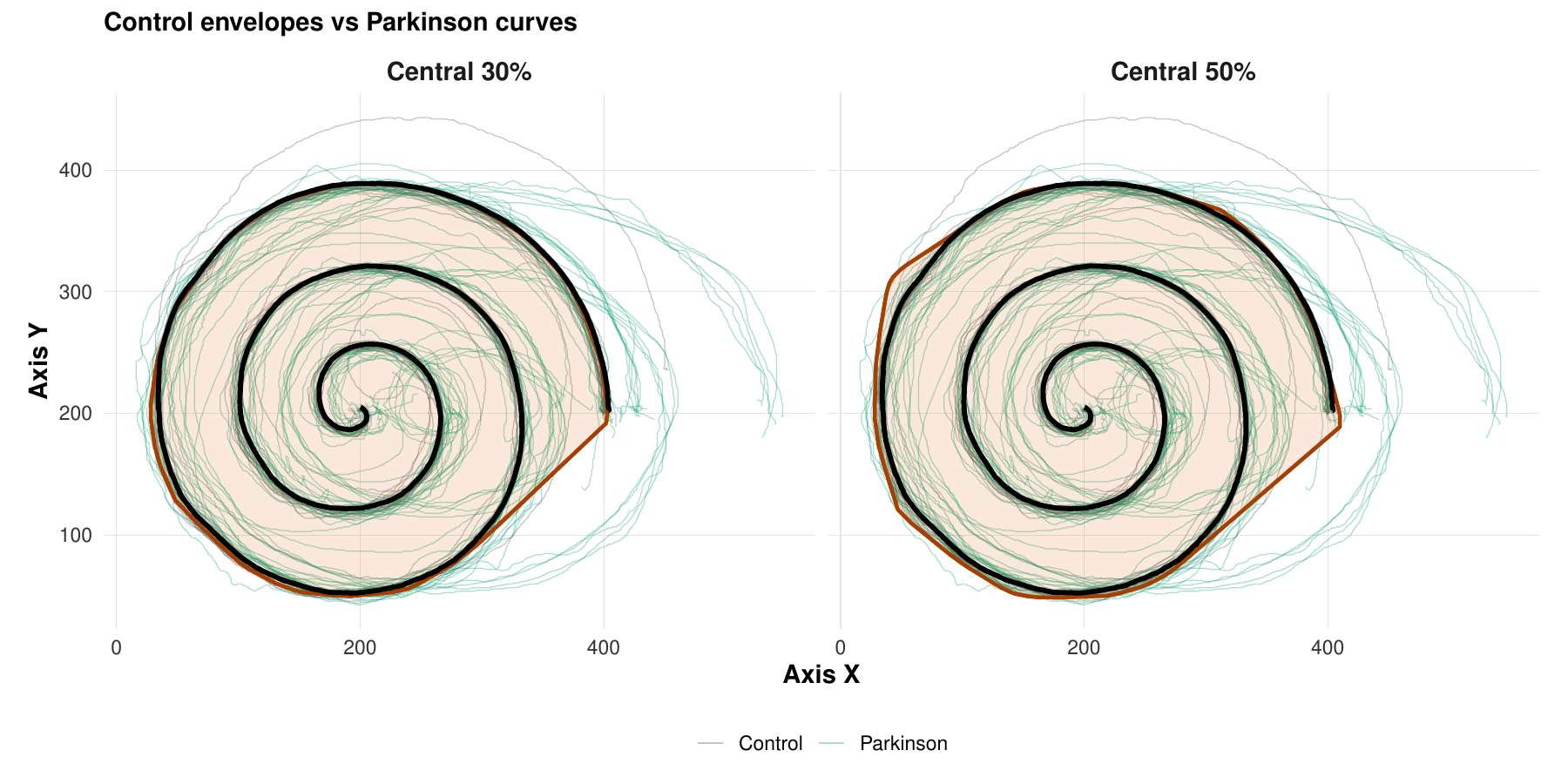}
\caption{Prediction regions for various confidence levels in the written test using control individuals as a reference. The plots depict the raw trajectories of both control and Parkinson's patients. We do not introduce covariates in the derivation of prediction regions for the Fréchet mean.}
    \label{fig:firma}
\end{figure}

\subsection{Distribution-Valued Representation of Biosensor Time Series}

\subsubsection{Motivation}

\begin{figure}[ht!]
\centering
\includegraphics[width=\linewidth]{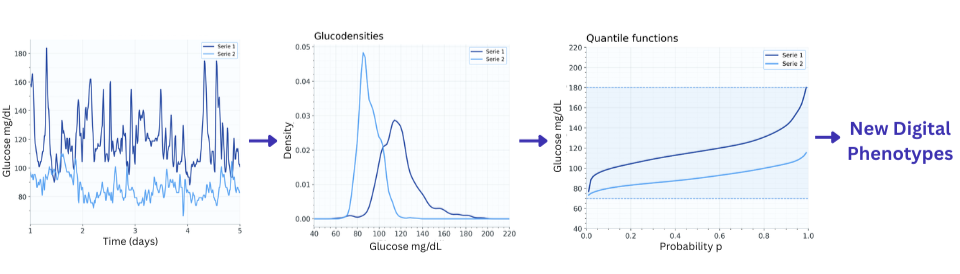}
\caption{Transformation of continuous glucose monitoring (CGM) time series from two individuals into their marginal density and quantile functions showing the new digital phenotypes considered in digital health.}
\label{fig:glucosetrans}
\end{figure}

Distributional data analysis has received increasing attention because many complex observations are more naturally represented by probability distributions than by finite-dimensional summaries \citep{klein2023distributional,matabuena2022kernel,pmlr-v162-meunier22b}. This representation is particularly useful for biological time series, whose marginal distributions may contain clinically relevant information that is not captured by conventional summaries such as the mean and standard deviation.

We consider continuous glucose monitoring (CGM) data, which provide repeated glucose measurements throughout the day and allow the full distribution of glucose values to be examined \citep{doi:10.1177/0962280221998064}. Distributional representations can capture behavior across low, intermediate, and high glucose concentrations, including patterns associated with hypoglycemia and hyperglycemia. Figure~\ref{fig:glucosetrans} illustrates the transformation of individual CGM time series into marginal density and quantile functions.

For each participant, we represent the CGM trajectory by its empirical probability distribution. The resulting response takes values in
\[
\mathcal{Y} = \mathcal{W}_2(T), \qquad T=[40,400]\subset\mathbb{R},
\]
where $\mathcal{W}_2(T)$ denotes the space of probability distributions supported on $T$, equipped with the $2$-Wasserstein metric. In the univariate setting, this metric is equivalent to the $L^2$ distance between quantile functions. This representation enables us to investigate how the entire glucose distribution varies with age, rather than restricting the analysis to a small number of CGM summary measures. Our objective is to estimate age-specific reference distributions and corresponding prediction regions among individuals without diabetes. These regions provide a descriptive representation of the range of glucose distributions expected at different ages and may serve as a basis for future research on personalized CGM reference values. 

\subsubsection{Data Description and Statistical Analysis}

We analyze data from \citet{shah2019continuous}, who collected glucose measurements at five-minute intervals over several days from 153 individuals without diabetes under free-living conditions (52 men and $101$ women; age range $7$-$80$ years). The original study was motivated by the need to characterize classical CGM metrics in healthy individuals using modern and more precise CGM technology. Using age-stratified summaries of glucose trajectories, the authors found that participants spent a median of 96\% of the observed time within the $70-140$ mg/dL range, with only a small proportion of measurements falling outside this interval. These results provide a reference characterization of glucose profiles across the age spectrum in a population without diabetes. Here, we take a different perspective by treating each participant's empirical glucose distribution as the response and age as the predictor. The conditional Fr\'echet mean is estimated using global Fr\'echet regression under the (2)-Wasserstein metric. Because univariate Wasserstein geometry can be represented through quantile functions, the estimated conditional Fr\'echet mean corresponds to an age-specific reference quantile function. To construct prediction regions, we measure residual variation using the supremum distance between quantile functions,
\[
d_2(\cdot,\cdot) = \|\cdot-\cdot\|_{\infty}.
\]
\noindent Thus, the $2$-Wasserstein metric is used to estimate the conditional center, whereas the supremum metric determines the radius and geometry of the prediction regions. We run the algorithm with $
\alpha = 0.20,$ corresponding to a nominal coverage level of $80\%$. The age-dependent prediction radius is estimated using the proposed $k$NN procedure, which selected \(k=20\).

\subsubsection{Results}

Figure~\ref{fig:glucoseref} presents the age-specific reference distributions and their corresponding ($1-\alpha=0.8$) coverage prediction region. The estimated conditional center remains relatively stable across the ages considered, whereas the prediction regions widen at older ages. This pattern suggests greater variability in the distribution of glucose values among older individuals, particularly in the lower and upper tails. The widening of the prediction regions should, however, be interpreted cautiously. It may reflect increased biological heterogeneity with age, a plausible interpretation consistent with the existing literature based on average glucose values \cite{keshet2023cgmap}. However, our analysis extends this perspective by considering the full marginal distribution of CGM time series rather than conventional scalar summaries alone. Overall, the analysis illustrates how distribution-valued prediction regions can characterize variation across the full glucose range and provide more detailed age-specific reference profiles than summaries such as mean glucose and standard deviation. Their potential use for screening or clinical decision-making would, however, require validation in independent cohorts including individuals both with and without diabetes.

\begin{figure}[H]
\centering
\begin{minipage}[b]{0.45\textwidth}
\centering
\includegraphics[width=\linewidth]{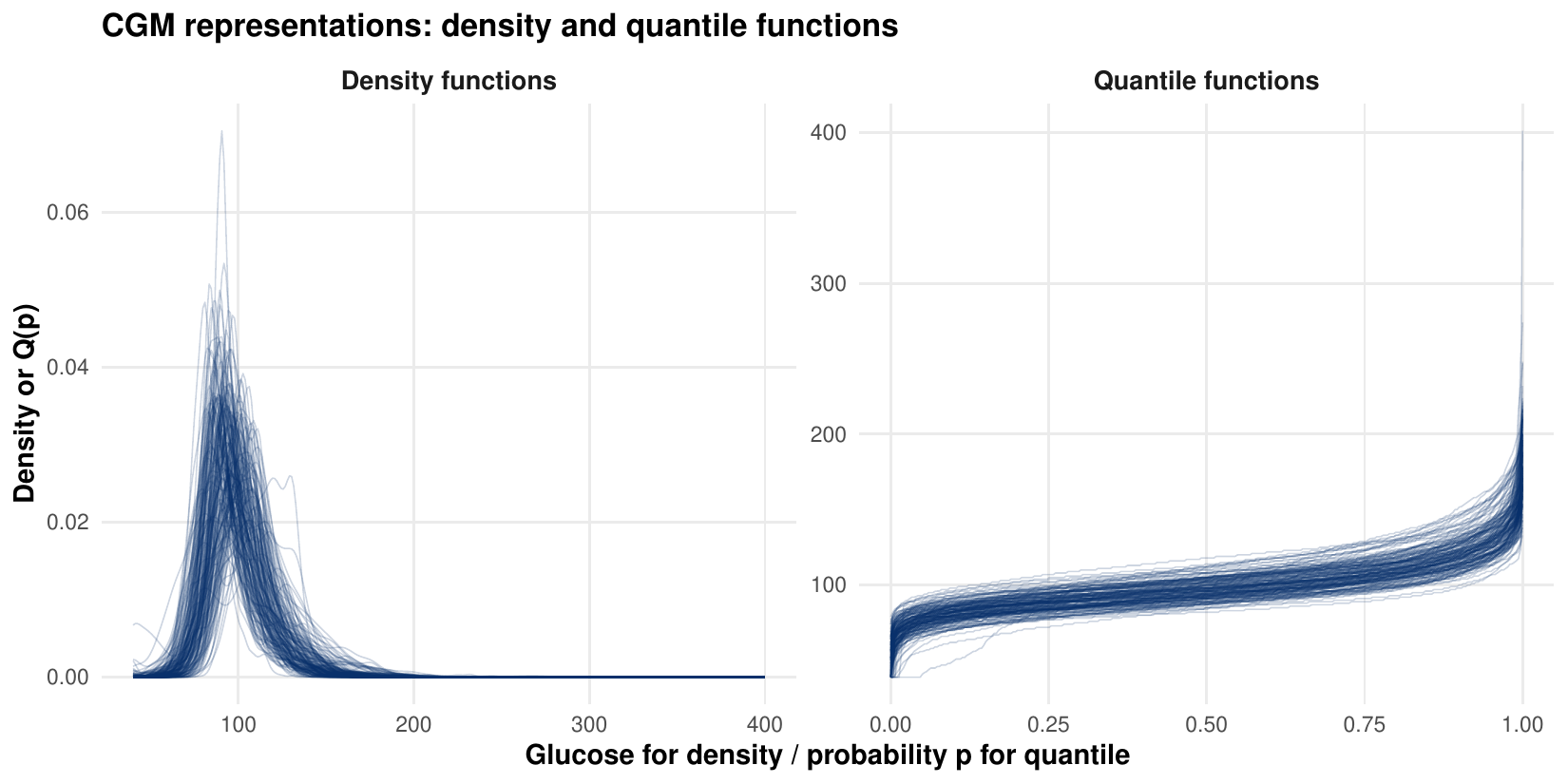}
\caption*{(a) Original representation of the CGM time series in terms of their density and quantile functions. }
\end{minipage}
\hfill
\begin{minipage}[b]{0.45\textwidth}
\centering
\includegraphics[width=\linewidth]{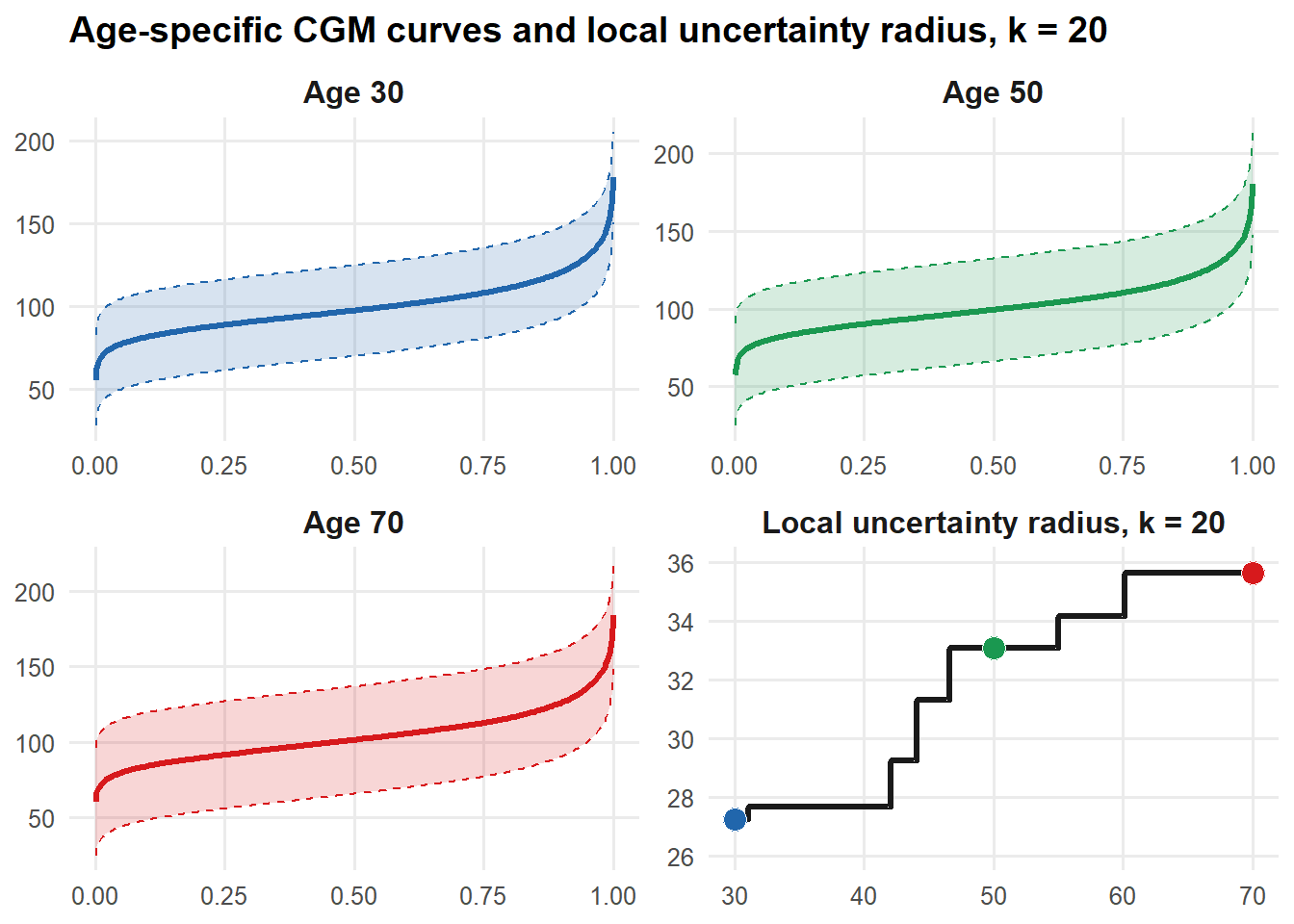}
\caption*{(b) Prediction regions for different ages and radius estimation.}
\end{minipage}
\caption{Estimated age-specific reference distributions for continuous glucose monitoring outcomes and their corresponding prediction regions at miscoverage level ($\alpha=0.20$), corresponding to ($80$\%) nominal coverage, for individuals aged 30, 50, and 70 years. The estimated conditional centers are relatively similar across the ages considered, whereas the prediction regions widen with age. The age-specific radii, estimated using ($k$=20) nearest-neighbors, increase with age.}

\label{fig:glucoseref}
\end{figure}



	\section{Discussion}

\noindent This paper introduces a general framework for constructing prediction regions for random objects in metric spaces. In the homoscedastic setting, we extend split conformal prediction using the distance between the response and an estimator of its conditional Fr\'echet mean as the conformity score. The prediction radius is then determined by a single unconditional scalar quantile, yielding finite-sample marginal coverage and fast convergence rates driven by the estimation errors of the Fr\'echet mean and the global quantile.

\noindent For heteroscedastic data, inspired by \citet{gyofi2020nearest}, we propose a local $k$NN method for estimating covariate-dependent prediction radii. The neighborhood size is selected through a data-driven criterion that balances global and local empirical calibration. The resulting center--radius construction is modular, computationally scalable, and compatible with any suitable regression estimator. Our theory establishes consistency conditionally on the observed sample and provides explicit bounds separating the errors due to center and radius estimation.

\noindent Compared with the distance-profile approach of \citet{zhou2024conformalinferencerandomobjects}, our method restricts prediction sets to metric balls, sacrificing some flexibility for multimodal or anisotropic distributions. In return, it produces simple and interpretable regions, avoids repeated local smoothing, and achieves substantially faster rates in the homoscedastic setting. The framework also extends to metric-space-valued time series through nearest-neighbor expert aggregation. Simulations and biomedical applications illustrate its accuracy and flexibility, although metric balls may remain conservative in strongly non-spherical settings. Future work will investigate more flexible prediction regions beyond metric balls by extending recent metric-learning ideas developed for Euclidean conformal prediction \citep{xu2024conformal,gauthier2026backward} to general metric spaces. We will also investigate extensions of causal inference methods to settings where the outcomes are random objects taking values in general metric spaces \citep{bhattacharjee2025doublyrobustestimationcausal,kurisu2025geodesic}.

	\bibliographystyle{plainnat}
	\bibliography{manuscript}
	\end{document}